\documentclass[letterpaper]{article} 
\usepackage{aaai2026}
\usepackage{times}  
\usepackage{helvet}  
\usepackage{courier}  
\usepackage[hyphens]{url}  
\usepackage{graphicx} 
\urlstyle{rm} 
\usepackage{natbib}  
\usepackage{caption} 
\usepackage{amssymb} 
\frenchspacing  
\setlength{\pdfpagewidth}{8.5in} 
\setlength{\pdfpageheight}{11in} 
\usepackage{graphicx}
\usepackage{amsmath}
\usepackage{booktabs}
\usepackage{url}
\usepackage{multirow}
\usepackage{epstopdf}
\usepackage{adjustbox}
\usepackage{amsthm}
\newtheorem{remark}{Remark}
\newtheorem{lemma}{Lemma}

\newtheorem{theorem}{Theorem}

\usepackage{enumitem}
\usepackage{array}
\usepackage{subfigure}
\usepackage{wasysym}
\usepackage{xcolor}  
\usepackage{tikz}  
\usepackage{fontawesome}  

\usepackage[linesnumbered,ruled]{algorithm2e}

\definecolor{darkred}{RGB}{139, 0, 0}
\definecolor{darkgreen}{RGB}{37, 193, 123}

\usepackage{newfloat}
\usepackage{listings}
\DeclareCaptionStyle{ruled}{labelfont=normalfont,labelsep=colon,strut=off} 
\lstset{%
	basicstyle={\footnotesize\ttfamily},
	numbers=left,numberstyle=\footnotesize,xleftmargin=2em,
	aboveskip=0pt,belowskip=0pt,%
	showstringspaces=false,tabsize=2,breaklines=true}

\setcounter{secnumdepth}{2} 

\title{Break the Tie: Learning Cluster-Customized Category Relationships for Categorical Data Clustering}

\author {
    Mingjie Zhao\textsuperscript{\rm 1},
    Zhanpei Huang\textsuperscript{\rm 2},
    Yang Lu\textsuperscript{\rm 3},
    Mengke Li\textsuperscript{\rm 4},\\
    Yiqun Zhang\textsuperscript{\rm 2}\equalcontrib,
    Weifeng Su\textsuperscript{\rm 5},
    Yiu-ming Cheung\textsuperscript{\rm 1}\equalcontrib
}
\affiliations {
    \textsuperscript{\rm 1}Department of Computer Science, Hong Kong Baptist University, Hong Kong SAR, China \\
    \textsuperscript{\rm 2}School of Computer Science and Technology, Guangdong University of Technology, Guangzhou, China\\
    \textsuperscript{\rm 3}School of Informatics, Xiamen University, Xiamen, China\\
    \textsuperscript{\rm 4}College of Computer Science and Software Engineering, Shenzhen University, Shenzhen, China\\
    \textsuperscript{\rm 5}Guangdong Key Lab of AI and Multi-Modal Data Processing, BNU-HKBU United International College, Zhuhai, China\\
    mjzhao@comp.hkbu.edu.hk, 2112405010@mail2.gdut.edu.cn, luyang@xmu.edu.cn, mengkeli@szu.edu.cn, yqzhang@gdut.edu.cn, wfsu@uic.edu.cn, ymc@comp.hkbu.edu.hk
}

\usepackage{bibentry}

\begin{document}

\maketitle

\begin{abstract}
Categorical attributes with qualitative values are ubiquitous in cluster analysis of real datasets. Unlike the Euclidean distance of numerical attributes, the categorical attributes lack well-defined relationships of their possible values (also called categories interchangeably), which hampers the exploration of compact categorical data clusters. Although most attempts are made for developing appropriate distance metrics, they typically assume a fixed topological relationship between categories when learning distance metrics, which limits their adaptability to varying cluster structures and often leads to suboptimal clustering performance. This paper, therefore, \textit{breaks the intrinsic relationship tie} of attribute categories and learns customized distance metrics suitable for flexibly and accurately revealing various cluster distributions. As a result, the fitting ability of the clustering algorithm is significantly enhanced, benefiting from the learnable category relationships. Moreover, the learned category relationships are proved to be Euclidean distance metric-compatible, enabling a seamless extension to mixed datasets that include both numerical and categorical attributes. Comparative experiments on 12 real benchmark datasets with significance tests show the superior clustering accuracy of the proposed method with an average ranking of 1.25, which is significantly higher than the 5.21 ranking of the current best-performing method. Code and extended version with detailed proofs are provided below.
\end{abstract}

\begin{links}
    \link{Code \& Datasets}{https://github.com/ZHAO-Mingjie/SCOF}
\end{links}

\section{Introduction}
\label{sct:intro}

Clustering analysis on real data usually deals with categorical attributes characterized by qualitative values without a well-defined category relationship like the Euclidean distance of numerical data~\cite{intro1, zhang2025adaptive,zhao2022heterogeneous,zhang2025towards}. For instance, the distance between ``lawyer" and ``driver" in an attribute ``Occupation" of a categorical dataset is not universally defined as the distance between 0.5 and 1.2 of numerical data. Since distance metric is the basis for most clustering algorithms, lacking suitable distance metrics may surely prevent the existing clustering techniques designed for numerical data from achieving satisfactory performance on categorical data. Moreover, additionally considering the definition of an appropriate distance metric brings greater challenges to categorical data clustering~\cite{intro3, intro4,feng2025robust}.

\begin{figure}[!t]	
\centerline{\includegraphics[width=1.04\linewidth]{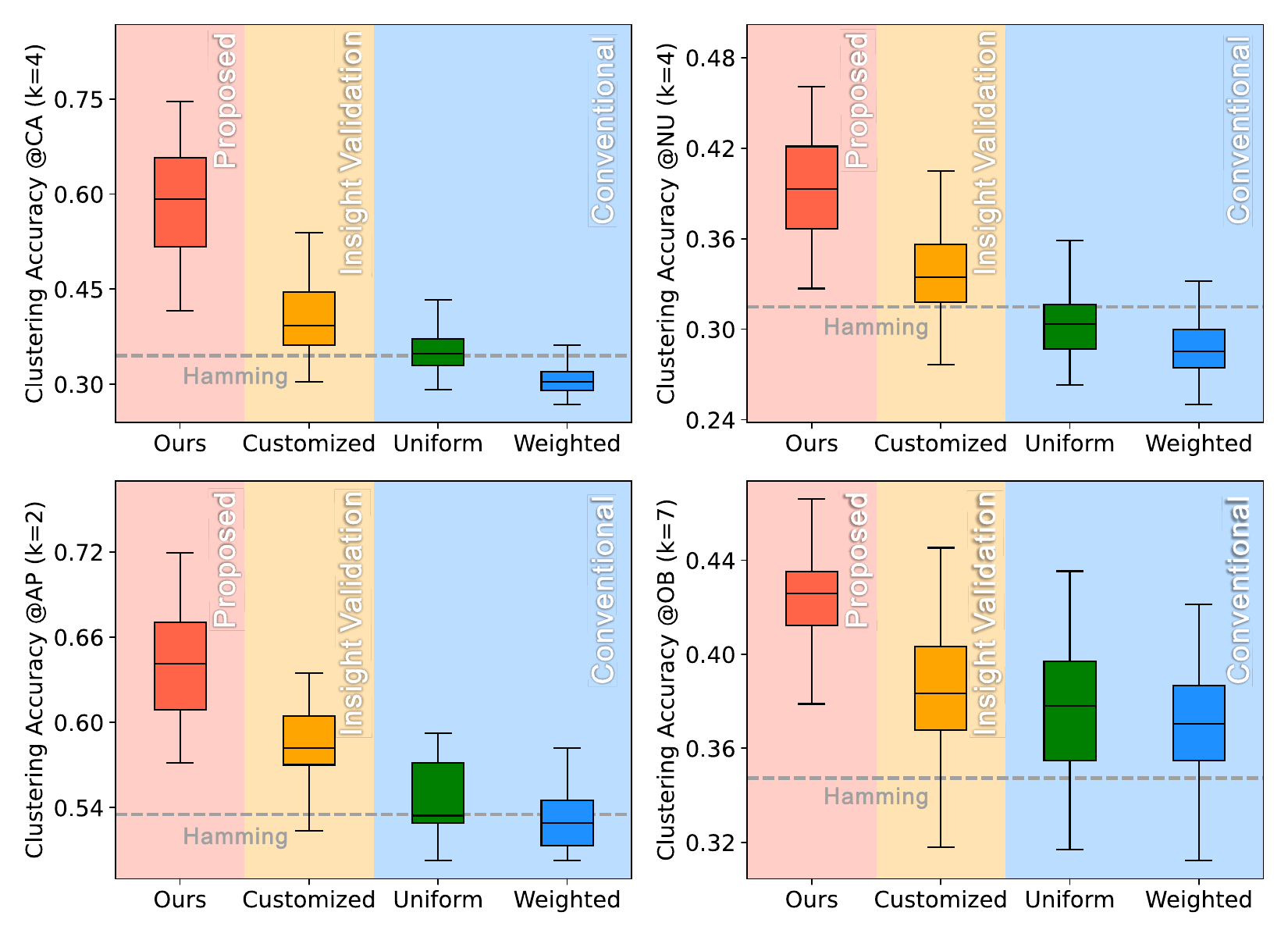}}
\caption{Clustering accuracy of our methods and three different distance measurement strategies using $k$-modes: Customized, Uniform, and Weighted on four datasets CA, NU, AP and TA in \textbf{Table~\ref{tb:statistics}}. Additionally, Hamming is also included as a baseline. $k$ indicates the number of clusters. Our method outperforms other strategies on these datasets. The ``Customized" strategy achieved better results than ``Uniform'' and ``Weight'' because it assigns customized category relationships to different clusters. }	
\label{fig:toy_example}	
\end{figure}

To uncover the implicit distance relationship of the possible values of categorical data~\cite{intro2,intro5,intro10}, some research primarily focuses on developing distance measures based on 
value-mapping techniques~\cite{hdm}, information entropy~\cite{lsm, udm, ebdmjournal}, significance test~\cite{he1sscm,he2interpretable,he3clusterability,he4CDM}, and statistical characteristics~\cite{adm, abdm, cde, cbdm_journal, CDCDR}. However, they are all task-independent and thus incapable of adapting to different clustering tasks across various categorical datasets. To address this limitation, distance learning methods further connect the distance to downstream clustering tasks using approaches such as similarity learning based on the distribution probabilities of attribute values ~\cite{ocil}, kernel space mapping~\cite{untie}, and graph-based learning~\cite{HDC,  ADC, CoForest}. While they achieve competitive clustering performance, the learned category relationships remain uniform for the global cluster distribution. This approach neglects the subspace specificity of clusters, i.e., some samples may exhibit a compact clustering effect in a certain subspace spanned by an attribute subset.

Although some subspace categorical data clustering methods~\cite{wchd, wksc, WOCIL} are proposed considering the attribute-cluster relationship recently, they primarily focus on learning the attributes' importance while still adopting a uniform category relationship among the possible values of each attribute across all the clusters. In fact, the same attribute may exhibit diverse category relationships across different clusters (i.e., conventional part of \textbf{Figure~\ref{fig:toy_example}}). For example, when considering the attribute ``occupation'' ($\mathbf{a}_o$) in a pneumonia patient dataset, the category relationship of $\mathbf{a}_o$'s possible values \{programmer, construction worker, driver, ...\} differs in two diagnosis clusters, one for COVID-19 ($C_{co}$), and one for pneumoconiosis ($C_{pn}$). That is, for cluster $C_{co}$, the occupational difference between ``programmer'' and ``construction worker'' is relatively small, as their chances of contracting the highly contagious COVID-19 are comparable, while for cluster $C_{pn}$, the difference between these two occupations is huge because ``construction worker'''s prolonged exposure to dust-laden environments significantly increases their likelihood of developing pneumoconiosis compared to programmers. Therefore, for categorical data, whose values are inherently more vague compared to numerical data, it is promising to define subspace category relationships for each attribute to better serve the exploration of different clusters.

To validate the effectiveness of the subspace category relationship for categorical data clustering, in \textbf{Figure~\ref{fig:toy_example}}, the $k$-modes~\cite{kmd} clustering accuracy corresponding to the following distance measurement strategies is plotted using box plots: 1) Customized: $k$-modes + random Customized category relationships, i.e., for a given attribute, randomly generate a series of category relationships for different clusters; 2) Uniform: $k$-modes + random Uniform category relationship; 3) Weighted: $k$-modes + Hamming distance + random Weighted attributes per the cluster; Additionally, $k$-modes + Hamming distance is also included as a baseline. The category relationships of the Customized, Uniform, and Weighted are randomly generated and are executed 100 times. It can be seen that: 1) Customized outperforms the others, which demonstrates that clustering-customized category relationship brings better clustering results than uniform category relationships. 2) Uniform clearly surpasses Weighted and Hamming, which indicates Intra-attribute distance relationship modeling outperforms attribute weighting. Therefore, obtaining the optimal subspace category relationship i.e., learning clustering-customized relationship among intra-attribute values for each cluster, is the key to achieving accurate categorical data clustering.

In this paper, a new learning paradigm for categorical data clustering that learns clustering-customized category relationships from cluster (DISC) is proposed. The learning process is no longer constrained to adjusting attribute weights for clusters under uniform and fix category relationships. Instead, category relationship is initially assigned to all clusters, which then evolves into different structures for each cluster during the learning process. This allows the relationships between attribute values to be adaptively captured under a high degree of freedom, ensuring that the learned distance relationships are meaningful and tailored to improve the quality and compactness of the clusters, thus enhancing clustering performance, as can be seen in \textbf{Figure~\ref{fig:toy_example}}. We model the distance relationship of intra-attribute possible values as a fully connected graph, and infer a tree-like category relationship to concisely and exactly reflect the relationship among attribute values. As a result, each attribute has a series of category relationships tailored for different clusters, significantly enhance the fitting ability for cluster optimization. Moreover, it is theoretically and empirically demonstrated that the learned category relationships exhibit strong compatibility with the Euclidean distance, enabling a natural extension of the method to mixed datasets, which include both numerical and categorical attributes, and resulting in enhanced clustering performance. The main contributions of this work are summarized into three-fold:

\begin{itemize}

\item \textbf{Revealing the effect of category relationships in clustering}: The coupling between clustering and category relationships is studied by extending the subspace clustering principles, and it is validated that optimal category relationship is the foundation for accurate clustering.

\item \textbf{Proposing a novel paradigm for category relationships learning}: A novel learning strategy is proposed to modeling and tuning the category relationships of attributes. It infers more appropriate category relationships of intra-attribute values to suit current partition results, effectively improving the clusters through distance-cluster joint optimization.

\item \textbf{Providing a rigorous metric with convergence guarantee}: A series of theoretical proofs are provided to rigorously demonstrate that the joint learning process of subspace category relationship exhibits strong compatibility with Euclidean distance, along with high efficiency and convergence.
\end{itemize} 

\section{Related Work}
\label{sct:relate_work}

\textbf{Distance Measures} for categorical data are generally categorized into encoding-based and direct measurement approaches. Early methods, such as one-hot encoding with Hamming distance~\cite{hdm}, fail to capture the full spectrum of dissimilarity between intra-attribute values. Subsequent methods~\cite{adm, abdm, cde} refine distance discrimination but still focus on pairwise distances without considering overall category relationships. Information entropy-based approaches~\cite{lsm,ebdmjournal} improve this by better capturing intra-attribute relationships, while more recent methods~\cite{MCDC, udm, het2hom, zhang2024order} attempt to place attribute values along a distance axis, offering a linear representation similar to numerical attributes. However, these methods remain task-independent and may not adapt well to different clustering tasks.

\textbf{Distance Learning} combines metric optimization with clustering through iterative learning. Early work~\cite{ocil} focus on modeling sample-cluster similarities using probability distributions within clusters. Later methods, such as kernel-based techniques~\cite{untie} and joint embedding of numerical and categorical attributes~\cite{mai, mix2vec, QGRL}, further refine distance metrics. More recent advances~\cite{dlc, HDC, ADC, CoForest} unify metric learning with adaptive graph weighting, optimizing both cluster structures and distance functions simultaneously.

\textbf{Subspace Learning} focuses on identifying attribute subsets that enhance clustering. Early methods~\cite{wchd} fix attribute weights, limiting adaptability, while subsequent approaches~\cite{wksc} iterate on weight and cluster assignments. Methods like~\cite{WOCIL} integrate attribute weighting for mixed data clustering, and extensions~\cite{HARR} unify categorical and numerical attributes via geometric projections, aligning latent dimensions across attribute types.

These advances improve clustering performance, but the learned category relationships remain uniform across the global cluster distribution, neglecting subspace-specific variations. This limits their capability in representing complex relationships between categorical attribute values.

\section{Proposed Method}

In this section,  we first formulate the problem (\textbf{\S~\ref{sct:pf}}). Then the subspace category relationship (\textbf{\S~\ref{sct:Method_sdsc}}) is then inferred from the modeled value-level relation structure, and the corresponding distance metric (\textbf{\S~\ref{sct:Method_hicd}}) is defined. After that, a joint learning (\textbf{\S~\ref{sct:Method_joint}}) scheme is introduced to make the distance learnable with clustering.

\subsection{Problem Formulation}
\label{sct:pf}

The problem of categorical data clustering with subspace category relationship learning is formulated as follows. 
Given a categorical dataset $X=\{\mathbf{x}_1,\mathbf{x}_2,\ldots,\mathbf{x}_n\}$ with $n$ data samples. Each data sample can be denoted as an $l$-dimensional row vector $\mathbf{x}_i=[x_{i,1},x_{i,2},\ldots,x_{i,l}]^\top$ represented by $l$ attributes $A=\{\mathbf{a}_1,\mathbf{a}_2,...,\mathbf{a}_l\}$. Each attribute $\mathbf{a}_r$ can be denoted as a column vector $\mathbf{a}_r=[x_{1,r},x_{2,r},...,x_{n,r}]$ composed of the $r$-th values of all $n$ samples. The $n$ values can be viewed as sampled from a limited number of possible values $V_r=\{v_r^1,v_r^2,...,v_r^{o_r}\}$ with $o_r$ indicating the number of possible values of $\mathbf{a}_r$. The goal of our cluster-customized category relationships clustering can be formalized as minimizing the intra-cluster dissimilarity:
\begin{equation}\label{eq:obj_pre}
	z(\mathbf{H}, \mathbf{M}, \mathbf{G})=\sum_{j=1}^k\sum_{i=1}^nh_{i,j}\cdot\Phi(\mathbf{x}_i,\mathbf{m}_j),
\end{equation}
where $\mathbf{H}$, $\mathbf{M}$ and $\mathbf{G}$ are a $n \times k$ cluster partition matrix, a $k \times l$ centroid matrix, and a matrix containing $k \times l$ cluster-customized category relationships (also called subspace category relationships) represented as graphs, respectively. Specifically, cluster partition $\mathbf{H}$ is an affiliation matrix with its $(i,j)$-th entry $h_{i,j}$ indicating the affiliation between sample $\mathbf{x}_i$ and cluster $C_j$, which can be written as: 
\begin{equation}
	\label{eq:qim_pre}
	h_{i,j}=\left\{
	\begin{array}{ll}
		1,  & \text{if}\ j=\arg\min\limits_y\Phi(\mathbf{x}_i,\mathbf{m}_y)\\
		0,  & \text{otherwise}\\
	\end{array}\right.
\end{equation}
with $\sum_{j=1}^kh_{i,j}=1$. $\Phi(\mathbf{x}_i, \mathbf{m}_y)$ is the attribute-level dissimilarity between sample $\mathbf{x}_i$ and cluster $C_y$ described by the cluster center $\mathbf{m}_y$. The value of $\mathbf{m}_y$ is computed following the way of the conventional $k$-modes clustering algorithm, i.e., the value of $m_{y,r}$ is equal to the mode from $\mathbf{a}_r$ in $C_y$. The attribute-level dissimilarity is computed in the context of subspace category relationships can be written as:
\begin{equation}
	\label{eq:dist_m}	\Phi(\mathbf{x}_i,\mathbf{m}_y)=\sum_{r=1}^{l}\phi(x_{i,r},m_{y,r}; G_{y,r}),
\end{equation}
where $G_{y,r} \in \mathbf{G}$ is the category relationship of $r$-th attribute in $y$-th cluster, and $\phi(x_{i,r}, m_{y,r}; G_{y,r})$ is the corresponding sample-cluster dissimilarity. 

This paper focuses on defining the category relationship within each cluster to facilitate a more thorough optimization through clustering-customized distance metric learning. Therefore, the key is how to reasonably define the distance $\phi(x_{i,r}, m_{y,r}; G_{y,r})$ under an appropriate category relationship $G_{y,r}$ in the attribute aspect.

\subsection{Subspace Category Relationship Inference}
\label{sct:Method_sdsc}

To infer subspace category relationships for each cluster, we first model the value-level relation through a fully-connected graph as it can flexibly reflect any relationship among possible values. A graph $G_{j,r}$ indicating the category relationship of $r$-the attribute in $j$-th cluster is denoted as $ G_{j,r} = <V_r, B_r, \mathbf{W}_{j,r}>$, where $V_r$ includes $o_r$ possible values as nodes. $B_r$ is the set of $o_r(o_r-1)/2$ edges connecting all the $o_r$ nodes. $\mathbf{W}_{j,r}$ is the weight matrix reflecting edge lengths.

Given cluster partition $\mathbf{H}$, for each cluster, the weight between two nodes $v_r^{u}$ and $v_r^{s}$ within a cluster $C_j$ is defined as the difference between their Conditional Probability Distributions (CPDs) extracted from the current cluster by:
\begin{equation}
    \label{eq:compute_w}
    {{\mathbf{W}}_{j,r}(u,s)} = |{p}({v_r^{u}}| {C_j}) - p({v_r^{s}|{C_j}})|,
\end{equation}
where $\mathbf{W}_{j,r}(u,s)$ is the $(u,s)$-th element of weight matrix $\mathbf{W}_{j,r}$, and $p({{v_r^{u}}|{C_j}})$ is the conditional probability of $v_r^{u}$ given cluster $C_j$, which can be computed as:
\begin{equation}
    \label{eq:compute_p}
   p({v_r^{u}}|{C_j}) = \frac{{ card(\{{\mathbf{x}_i} \in {C_j}|{{\mathbf{x}}_{i,r}} = {v_r^{u}}\} })}{{card({C_j})}},
\end{equation}
where $card( \cdot )$ counts the number of samples in a set. A condition probability distribution $p({{v_r^{u}}|{C_j}})$ describes the distribution pattern of a value within $C_j$. Consequently, two values with similar patterns are considered to be more similar, and thus the weight of the edge connecting them is smaller. Although the weighted fully connected graph provides a flexible way to represent the relationships among the intra-attribute values in each cluster, it introduces uncertainties in the distance representation because the relationship between a pair of values can be described by multiple paths.

To refine the graph, and consider that the common definition of distance between two samples is based on the shortest path length~\cite{Graphthe,topo}, we extract the shortest paths that can connect all the possible values (i.e., all nodes in $G_{j,r}$) to form the Minimum Spanning Tree (MST) $\mathcal{T}_{j,r}$ (called relation tree hereinafter) by:
    \begin{equation}\label{eq:min_tree}
    \mathcal{T}_{j,r}=\arg\min\limits_{\mathcal{E}_t \in B_r}\sum_{b_r^{u,s}\in \mathcal{E}_{T}} \mathbf{W}_{j,r}(u,s),
    \end{equation}
where $\mathcal{E}_{t}$ is a subset of $B_r$. $\mathcal{E}_{t}$ contains $o_r-1$ edges connecting all nodes $V_r$. $b_r^{u,s}$ from $B_r$ is the weighted edge connect nodes $v_r^u$ and $v_r^s$. Consequently, for a cluster $C_j$, a series of relation trees $\mathcal{T}_j=\{\mathcal{T}_{j,1},\mathcal{T}_{j,2}...,\mathcal{T}_{j,r}\}$ is inferred corresponding to all the attributes, and the subspace category relationship of all clusters can be represented as a set $\mathcal{T}=\{\mathcal{T}_1, \mathcal{T}_2,..,\mathcal{T}_k\}$ corresponding to graph set $G$. 

\begin{theorem}
\label{thm:graph_tree_consistency}
The relation tree \( \mathcal{T}_{j,r} \) defines a deterministic and Euclidean-compatible distance metric over the categorical attribute values.
\end{theorem}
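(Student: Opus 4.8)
The plan is to show that the tree metric induced by $\mathcal{T}_{j,r}$ is nothing but the ordinary one-dimensional Euclidean distance between the within-cluster conditional probabilities of the values, from which every clause of the statement is immediate. The starting observation is that the weights inside $G_{j,r}$ are intrinsically one-dimensional: writing $\theta_u := p(v_r^u|C_j)\in[0,1]$ for the conditional probability attached to $v_r^u$, Eq.~\eqref{eq:compute_w} says precisely that $\mathbf{W}_{j,r}(u,s)=|\theta_u-\theta_s|$. Hence $G_{j,r}$ is isometric to the $o_r$ points $\theta_1,\dots,\theta_{o_r}$ lying on the real line with the usual distance, and $\mathcal{T}_{j,r}$ is a spanning tree of that graph. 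The metric a weighted tree induces --- and the natural candidate for $\phi(\cdot,\cdot;\mathcal{T}_{j,r})$ --- is the length of the \emph{unique} node-to-node path; uniqueness of the path already removes the multi-path ambiguity noted just above Eq.~\eqref{eq:min_tree}, so the ``deterministic'' claim will follow once we also check that the \emph{value} of that length does not depend on which minimizer of Eq.~\eqref{eq:min_tree} is selected. Thus the whole theorem reduces to the single identity $\phi(v_r^u,v_r^s;\mathcal{T}_{j,r})=|\theta_u-\theta_s|$.

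First I would compute the weight of a minimum spanning tree. Sorting the coordinates $\theta_{(1)}\le\cdots\le\theta_{(o_r)}$ and setting $g_i=\theta_{(i+1)}-\theta_{(i)}$ for the consecutive gaps, the ``sorted path'' joining the values at positions $i$ and $i+1$ for every $i$ is a spanning tree of weight $\sum_i g_i=\theta_{(o_r)}-\theta_{(1)}$. For the matching lower bound, observe that the edge joining the values at sorted positions $p<q$ has weight $\theta_{(q)}-\theta_{(p)}=\sum_{i=p}^{q-1}g_i$, i.e.\ it ``covers'' the gaps $p,\dots,q-1$; summing over the edges of an arbitrary spanning tree $T$ and swapping the order of summation gives $\sum_{e\in T}\mathbf{W}_{j,r}(e)=\sum_i g_i\cdot\bigl|\{e\in T:\ e\text{ covers gap }i\}\bigr|\ge\sum_i g_i$, since connectivity of $T$ forces each gap to be crossed by at least one edge. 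Hence the total weight of $\mathcal{T}_{j,r}$ equals $\theta_{(o_r)}-\theta_{(1)}$, and equality in the bound forces every gap to be covered by exactly one edge, the only remaining freedom being how the weight-zero edges (those joining coordinate-tied values) are attached. Equivalently, up to contracting those zero-weight edges, $\mathcal{T}_{j,r}$ is the monotone path running from the minimum-coordinate value to the maximum-coordinate value.

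The identity now follows. For any $v_r^u,v_r^s$ the unique $\mathcal{T}_{j,r}$-path traverses some zero-weight edges onto the monotone backbone, moves along the backbone monotonically in the coordinate, and again traverses zero-weight edges at the far end, so its length telescopes to $|\theta_u-\theta_s|$, with no dependence on the particular MST chosen. Therefore $\phi(v_r^u,v_r^s;\mathcal{T}_{j,r})=|p(v_r^u|C_j)-p(v_r^s|C_j)|$, which is the determinism claim; the metric axioms are then automatic because $\phi$ literally is the one-dimensional Euclidean distance (non-negative, symmetric, vanishing on the diagonal, satisfying the triangle inequality, and a genuine metric once the $\theta_u$ are pairwise distinct, a pseudometric identifying values with identical within-cluster distributions otherwise). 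Euclidean compatibility is then exhibited explicitly by the isometry $\psi:v_r^u\mapsto p(v_r^u|C_j)$ of $(V_r,\phi)$ onto a subset of $\mathbb{R}\subset\mathbb{R}^d$, which is exactly the property that allows every categorical attribute to be replaced, cluster by cluster, with a single numerical coordinate and hence to be placed on the same footing as a genuine numerical attribute in the mixed-data extension.

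The step that demands real care --- and where I expect essentially all of the work to sit --- is promoting ``minimum weight'' to the structural statement ``monotone backbone plus zero-weight pendants'': the MST of colinear points is genuinely non-unique when coordinates coincide (and even Kruskal/Prim with different tie-breaking rules may return different trees), so one must argue that \emph{every} minimizer of Eq.~\eqref{eq:min_tree} induces the same path metric. The interval-covering count pins down the weight; turning it into the structural claim needs the observation that any non-monotone detour along a tree path can be eliminated by an edge exchange that strictly lowers the weight unless the removed edge has weight zero, together with the bookkeeping for ties. Once that is settled, the metric axioms and the embedding $\psi$ are each one line.
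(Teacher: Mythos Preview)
Your proposal is correct and rests on the same core insight as the paper: since the edge weights are $|\theta_u-\theta_s|$ for $\theta_u=p(v_r^u\mid C_j)$, the minimum spanning tree is (essentially) the sorted path and its induced path metric coincides with one-dimensional Euclidean distance. The paper organizes this into two lemmas --- one arguing, via a greedy ``adjacent nodes have locally minimal weight'' step, that the MST is the sorted path and preserves all pairwise distances, and one proving linearity by induction on $o_r$ --- both under the tacit assumption that the $\theta_u$ are strictly ordered. Your route differs in two useful ways: the interval-covering lower bound is a cleaner certificate that the MST weight equals $\theta_{(o_r)}-\theta_{(1)}$ than the paper's appeal to the greedy rule, and your explicit treatment of coordinate ties (the MST is then genuinely non-unique, but the induced path metric is invariant across minimizers) fills a case the paper simply assumes away. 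The structural step you flag as delicate can in fact be shortened without edge exchanges: once the MST weight equals the coordinate range, the tree path from a minimum-$\theta$ node to a maximum-$\theta$ node has length at least the range (triangle inequality along the path) and at most the total tree weight, hence equals both; equality forces that path to be monotone and to carry all the weight, so every off-path edge has weight zero --- which is exactly your ``monotone backbone plus zero-weight pendants'' description.
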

\begin{proof}
    It is proved that the relation tree  $\mathcal{T}_{j,r}$, inferred from the fully connected graph  $G_{j,r}$, retains all pairwise distances among nodes, establishing its determinism and uniqueness. Then, it is demonstrated that the distance metric induced by the relation tree always forms a linear structure, naturally compatible with the one-dimensional Euclidean geometry. A detailed proof can be found in \textbf{Appendix A.1}.
\end{proof}

\begin{remark} [Determinism and Euclidean Compatibility of the Inferred Relation Tree] \label{rmk:d&e_irt}
The inferred relation tree exhibits determinism and strong compatibility with Euclidean space (as proven in \textbf{Theorem~\ref{thm:graph_tree_consistency}}). The determinism of the relation tree ensures that the learned distance metric is stable and interpretable, providing clarity and eliminating the need for users to manually define or adjust category relationships. The compatibility with Euclidean space allows for easy extension to mixed datasets, making the method adaptable to both categorical and numerical attributes. This is further validated in \textbf{\S~\ref{sct: ablation}}, where our method achieves performance improvement and demonstrates superior results compared to other clustering methods.
\end{remark}

\subsection{Clustering-Customized Subspace Distance}
\label{sct:Method_hicd}

Given the subspace category relationship $\mathcal{T}$, we further tune them into customized distance metrics by modeling the distance between intra-attribute values to capture more compact clusters within subspaces. Specifically, for cluster $C_j$, given the relation tree $\mathcal{T}_{j,r}$ corresponding to attribute $\mathbf{a}_r$, the distance between values $v_r^{u}$ and $v_r^{s}$ can be computed by:
\begin{equation}
    \label{eq: tree-order_distance}
{\mathbf{D}_{j,r}(u,s)}{\rm{ = }}\sum_{\mathbf{W}_{j,r}(u,s)\in {T}} {\mathbf{W}_{j,r}(u,s)},
\end{equation}
where $\mathbf{D}$ is a $o_r \times o_r$ distance matrix of possible values, and $T$ is the set of edges splicing the shortest path from $v_r^{s}$ to $v_r^{u}$ in $\mathcal{T}_{j,r}$. Since this is computed based on the weights defined based on the intra-cluster statistics by \textbf{Eq.~(\ref{eq:compute_w})}, it is referred to as the clustering-customized subspace distance.

\begin{remark} [Generalized Attribute Weighting through Relation Tree] \label{rmk:gaw_rt}
The category relationship represented by the relation tree implicitly captures the importance of each attribute through a generalized attribute weighting mechanism. Specifically, the clustering-customized subspace distance model adapts to variations in intra-attribute distances, allowing for different magnitudes of distances between attribute values across different attributes. As indicated by Eqs~(\ref{eq:compute_w}) and~(\ref{eq: tree-order_distance}), a larger difference in attribute values leads to a more concentrated distribution of the $r$-th attribute within the cluster, making it more likely to be considered significant. This variability in the distribution of attribute values ensures that each attribute contributes differently when calculating attribute-level distances, thereby reflecting its importance within the overall category relationship. This approach can be viewed as a form of generalized attribute weighting, where the learned distances effectively decompose the attribute importance into a more detailed and dynamic layer of distance learning.
\end{remark}

Accordingly, the sample-cluster distance $\phi(x_{i,r}, m_{j,r})$ reflected by $\mathcal{T}_{j,r}$ for the $r$-th attribute can be defined as:
\begin{equation}
	\label{eq:dist_r}
        \phi ({{x}_{i,r}},{{m}_{j,r}};{\mathcal{T}_{j,r}}) = \mathbf{D}_{j,r}(u,s),
\end{equation}
where sample value $x_{i,r}$ and center value $m_{j,r}$ are assumed be equal to the possible values $v_r^u$ and $v_r^s$, respectively. The overall sample-cluster distance $\Phi(\mathbf{x}_i,\mathbf{m}_j)$ is defined as:
\begin{equation}
	\label{eq:dist_final}
	\Phi(\mathbf{x}_i,\mathbf{m}_j; \mathcal{T}_j)=\sum_{r=1}^{l}\phi(x_{i,r},m_{j,r};\mathcal{T}_{j,r}).
\end{equation}

\begin{theorem}\label{thm:sample_cluster_distance}
The sample-cluster distance $ \Phi (\mathbf{x}_i, \mathbf{m}_j; \mathcal{T}_j) $ defined in the context of $\mathcal{T}_j$ represents a valid distance metric.
\end{theorem}
\begin{proof}
It is proved that the distance measure $\mathbf{D}_{j,r}(u,s) $ defined within the context of the relation tree $\mathcal{T}_{j,r}$ constitutes a valid distance metric. Then, the sample-cluster distance $\Phi (\mathbf{x}_i, \mathbf{m}_j; \mathcal{T}_j)$ is shown to be a weighted sum of $\mathbf{D}_{j,r}(u,s)$ with non-negative weights, also satisfies the conditions of non-negativity, symmetry, and triangle inequality, thereby forming a valid distance metric. The detailed proof can be found in \textbf{Appendix A.2}.
\end{proof}

It can be seen that the cluster-customized distance satisfies the definition of a valid distance metric, which ensures consistent and reliable distance calculations across different clusters, thereby improving clustering accuracy.

\subsection{Joint Learning of Distance and Clusters}\label{sct:Method_joint}

To jointly optimize the distance metric and sample partition for better mutual adaptation, the subspace category relationship inference is integrated into the clustering objective, which can be written based on Eqs.~(\ref{eq:obj_pre}), (\ref{eq:dist_m}), and (\ref{eq:dist_r}):
\begin{equation}\label{eq:obj}
	z(\mathbf{H},M, \mathcal{T})=\sum_{j=1}^k\sum_{i=1}^nh_{i,j}\cdot\sum_{r=1}^{l}\mathbf{D}_{j,r}(u,s), 
\end{equation}
where we assume that $x_{i,r}=v_r^u$ and $m_{j,r}=v_r^s$. Then the problem can be formulated as computing $\mathbf{H}$, $M$, and $\mathcal{T}$ to minimize $z$, which can be summarized into the following three steps:
1) Fix $M$ and $\mathcal{T}$, compute $\mathbf{H}$; 2) Fix $\mathcal{T}$ and $\mathbf{H}$, compute $M$; 3) Repeat 1) and 2) until convergence, fix $\mathbf{H}$ and $M$ and reinfer $\mathcal{T}$ according to Eqs.~(\ref{eq:compute_w}),~(\ref{eq:compute_p}), and~(\ref{eq: tree-order_distance}). These three steps are repeated until the $\mathbf{H}$ no longer changes. The algorithm is summarized as \textbf{Algorithm 1} in Extended Version, which can be proven to guarantee convergence and is with complexity of $O(nlk{\mathcal{I}}{E})$, linear to $n$, $l$, and $k$. (Proof in \textbf{Appendix A.3} and \textbf{Appendix A.4}).

\begin{theorem} \label{the:converge}
DISC algorithm converges to a local minimum in a finite number of iterations.
\end{theorem}
\begin{proof} 
The inner loop is proved to attain a local minimum of $z(\mathbf{H}, M, \mathcal{T})$ within a finite number of iterations, when $\mathcal{T}$ is fixed. Then, the outer loop $E$ is proved to converges to a local minimum of the objective function $z(\mathbf{H}, M, \mathcal{T})$ in a finite number of iterations. The detailed proof can be found in \textbf{Appendix A.3}.
\end{proof}

\section{Experiment}

The experimental setup is first introduced in \textbf{\S~\ref{sct: setup}}. To comprehensively validate the effectiveness and superiority of our method, four experiments have been conducted. \textbf{\S~\ref{sct: cpAsig}}: Clustering performance evaluation; \textbf{\S~\ref{sct: ablation}}: Ablation studies; \textbf{\S~\ref{sct: cAe}}: Convergence and efficiency evaluation; \textbf{\S~\ref{sct: qa}}: Qualitative study of the clustering effect. The experiments are conducted on an Intel i7-13500H@2.60GHz workstation with 16G RAM and MATLAB R2022b.

\subsection{Experimental Setup}
\label{sct: setup}

\textbf{10 counterparts} are compared: KMD~\cite{kmd} is a traditional method, CBDM~\cite{cbdm_conf}, WOCIL~\cite{WOCIL}, CURE~\cite{cde}, CDCDR~\cite{CDCDR}, and HDC~\cite{HDC} are subspace clustering baselines, while CoForest~\cite{CoForest}, MCDC~\cite{MCDC}, SigDT~\cite{SigDT}, and HARR~\cite{HARR} are State-Of-The-Art (SOTA) distance metric learning and statistical-based methods. Each method is implemented 10 times and the average performance is reported.

\noindent \textbf{12 Datasets} from various domains are utilized for the experiments. All the datasets are real public datasets collected from the UCI Machine Learning Repository~\cite{uci}, and their statistical information is shown in \textbf{Table~\ref{tb:statistics}}. For all the counterparts, we set $k$ at $k^*$, i.e., the true number of clusters given by the dataset labels.

\begin{table}[!t]
\caption{Dataset statistics. $l$, $n$, and $k^*$ are the numbers of attributes, samples, and true number of clusters, respectively.}
\label{tb:statistics}
\centering
\resizebox{0.8\columnwidth}{!}{
\begin{tabular}{r|cc|ccc}
\toprule
No. & Dataset & Abbrev. & $l$ & $n$ \ \ \ & $k^*$ \\
\midrule
1& Car Evaluation & CA & 6 & 1728 & 4\\
2& Soybean (large) & SB &35 & 266 & 15 \\	
3& Nursery School & NU & 8 & 12960 & 4 \\
4& Contraceptive Choice & CC &7 & 1473 & 3 \\	
5& Amphibians & AP &12 & 189 & 2 \\	
6& Dermatology & DT &33 & 366 & 6 \\	
7& Auction Verifications & AV &6 & 2043 & 2 \\
8& Obesity Levels & OB & 8 & 2111 & 7 \\	
9& Teaching Assistant& TA & 4 & 151 & 3 \\	
10& Bank Marking & BM & 9 & 45211 & 2 \\	
11& Congressional Voting & CV & 16 & 435 & 2 \\	
12& Zoo & ZO &16 & 101 & 7 \\		
\bottomrule
\end{tabular}}
\end{table}

\noindent \textbf{Three validity indices}, including clustering accuracy (ACC)~\cite{ex1} and ARI coefficients~\cite{ex3,ex4}. To ensure a fair comparison, an entropy-based clustering CoMPactness (CMP) measure is introduced to eliminate the influence of different metric magnitudes, which is computed:
\begin{equation}
\label{eq:cmp}
\text{CMP} = \frac{1}{lk} \sum_{j=1}^{k} \sum_{r=1}^{l} \sum_{u=1}^{o_r}\frac{ -p(v_r^u | C_j)\log p(v_r^u | C_j)}{\log o_r}.
\end{equation}
The CMP quantifies attribute value concentration within clusters and approaches zero for highly concentrated attribute values, indicating cluster compactness.

\begin{table*}[!t]
\caption{Clustering performance evaluated by the external ACC and ARI, and internal CMP. Larger values of ACC (ranging from 0 to 1) and ARI (ranging from -1 to 1) and smaller values of CMP (ranging from 0 to 1) indicate better clustering performance. The best and second-best results on each dataset are highlighted in \textbf{bold} and \underline{underlined}, respectively.}
\label{tb:clustering}
\centering
\resizebox{2.05\columnwidth}{!}{
\begin{tabular}{l|l|ccccccccccc}
\toprule
 \multirow{2}{*}{Indices}&  \multirow{2}{*}{Data} &  {KMD } & {CBDM} &  {WOCIL}  &  {CURE}   &  {CDCDR}  &  {HDC}  &  {CoForest} &  {MCDC}  &  {SigDT}  &   {HARR}&  {DISC} \\

 & & [DMKD'98]& [TKDD'12]& [TNNLS'17] &[TKDE'18]&[PR'22] &[TPAMI'22] &[ECAI'24]&[ICDCS'24] &[In Sci'25] & [ESWA'25] & (ours) \\
\midrule

\multirow{13}{*}{ACC}&CA& 0.3795$\pm$0.04	& -	& 0.3665$\pm$0.06	& 0.4174$\pm$0.05	& 0.3376$\pm$0.06	& 0.3800$\pm$0.04	& 0.4024$\pm$0.05	& 0.3258$\pm$0.07	& \underline{0.5336$\pm$0.00}	& 0.3678$\pm$0.09	& \textbf{0.5826$\pm$0.06}\\	
&SB& 0.4970$\pm$0.05	& 0.5248$\pm$0.05	& 0.5778$\pm$0.03	& 0.5711$\pm$0.05	& 0.5805$\pm$0.06	& 0.5226$\pm$0.06	& \textbf{0.5996$\pm$0.02}	& 0.4846$\pm$0.04	& 0.5301$\pm$0.00	& 0.5421$\pm$0.04	& \underline{0.5865$\pm$0.06}\\	
&NU& 0.3315$\pm$0.03	& -	& 0.3106$\pm$0.12	& 0.3110$\pm$0.02	& 0.2977$\pm$0.00	& 0.3315$\pm$0.03	& 0.2974$\pm$0.06	& 0.2547$\pm$0.01	& \textbf{0.4020$\pm$0.00}	& 0.3127$\pm$0.04	& \underline{0.3908$\pm$0.05}\\	
&CC& 0.3800$\pm$0.02	& 0.4087$\pm$0.02	& 0.3854$\pm$0.02	& 0.3999$\pm$0.03	& 0.4144$\pm$0.02	& 0.4269$\pm$0.01	& 0.4016$\pm$0.02	& \underline{0.4304$\pm$0.01}	& 0.3374$\pm$0.00	& 0.4284$\pm$0.01	& \textbf{0.4432$\pm$0.01}\\	
&AP& 0.5354$\pm$0.03	& 0.5619$\pm$0.04	& 0.6095$\pm$0.01	& \underline{0.6164$\pm$0.01}	& 0.6026$\pm$0.01	& 0.5667$\pm$0.03	& 0.5497$\pm$0.03	& 0.5545$\pm$0.03	& 0.5132$\pm$0.00	& 0.5630$\pm$0.02	& \textbf{0.6413$\pm$0.04}\\	
&DT& 0.5972$\pm$0.12	& 0.6201$\pm$0.15	& 0.6000$\pm$0.11	& 0.7433$\pm$0.15	& 0.7050$\pm$0.11	& 0.6648$\pm$0.14	& 0.6757$\pm$0.09	& 0.5712$\pm$0.11	& \underline{0.8436$\pm$0.00}	& 0.7109$\pm$0.11	& \textbf{0.8592$\pm$0.06}\\	
&AV& 0.6251$\pm$0.08	& 0.6695$\pm$0.10	& 0.6610$\pm$0.13	& 0.6564$\pm$0.09	& \underline{0.7068$\pm$0.10}	& 0.6364$\pm$0.10	& 0.6669$\pm$0.10	& 0.6348$\pm$0.12	& 0.6383$\pm$0.00	& 0.6441$\pm$0.14	& \textbf{0.7258$\pm$0.11}\\	
&OB& 0.3403$\pm$0.05	& 0.3590$\pm$0.04	& 0.3622$\pm$0.03	& 0.3794$\pm$0.03	& 0.2923$\pm$0.04	& 0.3651$\pm$0.04	& \underline{0.3828$\pm$0.02}	& 0.3596$\pm$0.03	& 0.3813$\pm$0.00	& 0.3732$\pm$0.01	& \textbf{0.4251$\pm$0.02}\\	
&TA& 0.4073$\pm$0.03	& 0.4272$\pm$0.02	& 0.4245$\pm$0.01	& 0.4225$\pm$0.03	& 0.3907$\pm$0.03	& 0.4159$\pm$0.02	& 0.4132$\pm$0.03	& \underline{0.4298$\pm$0.03}	& 0.4172$\pm$0.00	& 0.4252$\pm$0.01	& \textbf{0.4391$\pm$0.05}\\	
&BM& 0.6395$\pm$0.10	& 0.6403$\pm$0.10	& 0.5997$\pm$0.07	& 0.5804$\pm$0.06	& \underline{0.6977$\pm$0.11}	& 0.5569$\pm$0.03	& 0.6087$\pm$0.07	& 0.6052$\pm$0.08	& 0.2492$\pm$0.00	& 0.6168$\pm$0.09	& \textbf{0.7290$\pm$0.10}\\	
&CV& 0.8628$\pm$0.01	& 0.8754$\pm$0.00	& 0.8671$\pm$0.00	& 0.8455$\pm$0.08	& 0.8462$\pm$0.08	& 0.8736$\pm$0.00	& \textbf{0.8768$\pm$0.00}	& 0.8414$\pm$0.08	& 0.5816$\pm$0.00	& 0.8736$\pm$0.00	& \underline{0.8759$\pm$0.00}\\	
&ZO& 0.6970$\pm$0.09	& 0.7287$\pm$0.07	& 0.6881$\pm$0.08	& \underline{0.7693$\pm$0.07}	& 0.7574$\pm$0.10	& 0.7287$\pm$0.08	& 0.7238$\pm$0.13	& 0.6257$\pm$0.07	& 0.7228$\pm$0.00	& 0.7257$\pm$0.03	& \textbf{0.8050$\pm$0.08}\\


\midrule
\multicolumn{2}{c|}{Average Rank} &8.38   & 6.38   & 6.75   & 5.25  &    5.92 &   6.29  &  5.42   & 8.33  &  6.83 &  5.21 & 1.25\\	
\bottomrule

\multirow{13}{*}{ARI}&CA& 0.0341$\pm$0.04	& -	& 0.0433$\pm$0.05	& 0.0548$\pm$0.05	& 0.0027$\pm$0.01	& 0.0347$\pm$0.03	& \underline{0.0668$\pm$0.05}	& 0.0043$\pm$0.01	& -0.0678$\pm$0.00	& 0.0433$\pm$0.05	& \textbf{0.0964$\pm$0.07}\\	
&SB& 0.3245$\pm$0.04	& 0.3540$\pm$0.03	& 0.3755$\pm$0.02	& 0.4071$\pm$0.05	& \underline{0.4104$\pm$0.07}	& 0.3502$\pm$0.04	& 0.4057$\pm$0.02	& 0.2941$\pm$0.03	& 0.3811$\pm$0.00	& 0.3868$\pm$0.03	& \textbf{0.4135$\pm$0.04}\\	
&NU& 0.0539$\pm$0.01	& -	& 0.0679$\pm$0.15	& 0.0350$\pm$0.02	& \underline{0.0730$\pm$0.00}	& 0.0539$\pm$0.01	& 0.0451$\pm$0.07	& 0.0027$\pm$0.00	& 0.0310$\pm$0.00	& 0.0327$\pm$0.05	& \textbf{0.0761$\pm$0.07}\\	
&CC& 0.0043$\pm$0.01	& 0.0289$\pm$0.01	& 0.0072$\pm$0.01	& 0.0164$\pm$0.02	& 0.0218$\pm$0.01	& 0.0313$\pm$0.01	& 0.0211$\pm$0.01	& 0.0311$\pm$0.01	& 0.0231$\pm$0.00	& \underline{0.0330$\pm$0.00}	& \textbf{0.0419$\pm$0.01}\\	
&AP& 0.0002$\pm$0.01	& 0.0167$\pm$0.02	& 0.0410$\pm$0.01	& \underline{0.0481$\pm$0.01}	& 0.0342$\pm$0.01	& 0.0151$\pm$0.02	& 0.0055$\pm$0.02	& -0.0007$\pm$0.02	& 0.0470$\pm$0.00	& 0.0022$\pm$0.01	& \textbf{0.0783$\pm$0.04}\\	
&DT& 0.4342$\pm$0.15	& 0.5450$\pm$0.21	& 0.4774$\pm$0.16	& 0.6905$\pm$0.18	& 0.6752$\pm$0.13	& 0.5813$\pm$0.19	& 0.6163$\pm$0.11	& 0.4296$\pm$0.16	& \underline{0.7821$\pm$0.00}	& 0.6520$\pm$0.11	& \textbf{0.8122$\pm$0.07}\\	
&AV& 0.0073$\pm$0.02	& 0.0365$\pm$0.02	& 0.0427$\pm$0.09	& 0.0319$\pm$0.03	& \underline{0.0578$\pm$0.07}	& 0.0199$\pm$0.02	& 0.0200$\pm$0.04	& 0.0094$\pm$0.02	& \textbf{0.0752$\pm$0.00}	& 0.0214$\pm$0.03	& 0.0390$\pm$0.05\\	
&OB& 0.1286$\pm$0.06	& 0.1389$\pm$0.04	& 0.1374$\pm$0.03	& 0.1669$\pm$0.05	& 0.0958$\pm$0.03	& 0.1497$\pm$0.05	& 0.1652$\pm$0.03	& 0.1435$\pm$0.03	& 0.1674$\pm$0.00	& \underline{0.2032$\pm$0.02}	& \textbf{0.2255$\pm$0.01}\\	
&TA& 0.0144$\pm$0.01	& 0.0285$\pm$0.01	& 0.0310$\pm$0.02	& 0.0188$\pm$0.01	& 0.0120$\pm$0.01	& 0.0166$\pm$0.01	& 0.0264$\pm$0.02	& \underline{0.0377$\pm$0.02}	& 0.0197$\pm$0.00	& 0.0113$\pm$0.00	& \textbf{0.0394$\pm$0.03}\\	
&BM& 0.0281$\pm$0.02	& 0.0094$\pm$0.06	& 0.0118$\pm$0.02	& -0.0163$\pm$0.03	& \textbf{0.0693$\pm$0.11}	& 0.0012$\pm$0.02	& 0.0217$\pm$0.01	& 0.0087$\pm$0.03	& -0.0177$\pm$0.00	& 0.0209$\pm$0.01	& \underline{0.0472$\pm$0.05}\\	
&CV& 0.5254$\pm$0.02	& 0.5627$\pm$0.00	& 0.5381$\pm$0.00	& 0.4948$\pm$0.17	& 0.4989$\pm$0.18	& 0.5572$\pm$0.00	& \textbf{0.5668$\pm$0.00}	& 0.4831$\pm$0.17	& 0.3651$\pm$0.00	& 0.5572$\pm$0.00	& \underline{0.5633$\pm$0.00}\\	
&ZO& 0.6456$\pm$0.12	& 0.6822$\pm$0.10	& 0.6353$\pm$0.12	& \underline{0.7350$\pm$0.11}	& 0.7072$\pm$0.12	& 0.6847$\pm$0.12	& 0.6912$\pm$0.17	& 0.4950$\pm$0.16	& 0.5920$\pm$0.00	& 0.6765$\pm$0.05	& \textbf{0.7736$\pm$0.11}\\

\midrule
\multicolumn{2}{c|}{Average Rank} &8.38   & 6.75   & 6.13   & 5.42   & 5.33 &   6.50    &5.08  &  8.58   &6.58   &5.83 &1.42\\	
\bottomrule

\multirow{13}{*}{CMP}&CA& 0.8852$\pm$0.00	& -	& 0.7960$\pm$0.02	& 0.8026$\pm$0.02	& 0.8271$\pm$0.01	& 0.8852$\pm$0.00	& \underline{0.7932$\pm$0.02}	& 0.8352$\pm$0.02	& 0.8792$\pm$0.00	& 0.8100$\pm$0.04	& \textbf{0.6557$\pm$0.06}\\	
&SB& 0.3101$\pm$0.02	& 0.2936$\pm$0.02	& \textbf{0.2641$\pm$0.01}	& 0.2799$\pm$0.01	& 0.2824$\pm$0.01	& 0.2862$\pm$0.01	& 0.2738$\pm$0.01	& 0.3560$\pm$0.02	& 0.3899$\pm$0.00	& 0.2949$\pm$0.01	& \underline{0.2714$\pm$0.01}\\	
&NU& 0.8908$\pm$0.00	& -	& 0.7948$\pm$0.04	& 0.8162$\pm$0.02	& 0.8750$\pm$0.00	& 0.8908$\pm$0.00	& \underline{0.7688$\pm$0.01}	& 0.8416$\pm$0.02	& 0.9144$\pm$0.00	& 0.7735$\pm$0.02	& \textbf{0.6610$\pm$0.06}\\	
&CC& 0.6270$\pm$0.03	& 0.5626$\pm$0.03	& 0.5988$\pm$0.02	& 0.5942$\pm$0.02	& 0.5828$\pm$0.02	& 0.5892$\pm$0.02	& 0.5603$\pm$0.02	& 0.5799$\pm$0.02	& \underline{0.5085$\pm$0.00}	& 0.5740$\pm$0.01	& \textbf{0.4222$\pm$0.04}\\	
&AP& 0.6083$\pm$0.01	& 0.6044$\pm$0.02	& 0.5738$\pm$0.01	& 0.5814$\pm$0.00	& 0.5730$\pm$0.00	& 0.5850$\pm$0.02	& 0.5771$\pm$0.02	& \underline{0.5378$\pm$0.05}	& \textbf{0.5285$\pm$0.00}	& 0.6211$\pm$0.05	& 0.5692$\pm$0.03\\	
&DT& 0.4240$\pm$0.02	& 0.4065$\pm$0.03	& 0.3942$\pm$0.01	& 0.3886$\pm$0.02	& 0.3804$\pm$0.02	& 0.3830$\pm$0.03	& 0.3954$\pm$0.01	& 0.4250$\pm$0.03	& \underline{0.3673$\pm$0.00}	& 0.3947$\pm$0.01	& \textbf{0.3314$\pm$0.02}\\	
&AV& 0.7039$\pm$0.02	& 0.6623$\pm$0.05	& 0.6246$\pm$0.06	& 0.6194$\pm$0.08	& 0.6798$\pm$0.05	& 0.7011$\pm$0.02	& 0.6260$\pm$0.03	& 0.6317$\pm$0.06	& \textbf{0.4809$\pm$0.00}	& 0.7073$\pm$0.05	& \underline{0.6042$\pm$0.10}\\	
&OB& 0.3324$\pm$0.04	& 0.3617$\pm$0.05	& 0.3128$\pm$0.03	& 0.3289$\pm$0.04	& 0.4115$\pm$0.03	& 0.3646$\pm$0.05	& 0.2870$\pm$0.03	& 0.2779$\pm$0.03	& 0.3824$\pm$0.00	& \underline{0.2445$\pm$0.02}	& \textbf{0.1793$\pm$0.05}\\	
&TA& 0.4537$\pm$0.05	& 0.4393$\pm$0.06	& \underline{0.3701$\pm$0.02}	& 0.4993$\pm$0.04	& \textbf{0.3273$\pm$0.07}	& 0.5097$\pm$0.04	& 0.3941$\pm$0.04	& 0.4701$\pm$0.08	& 0.6161$\pm$0.00	& 0.5332$\pm$0.01	& \underline{0.3701$\pm$0.06}\\	
&BM& 0.6293$\pm$0.02	& 0.6302$\pm$0.04	& 0.6016$\pm$0.03	& 0.6170$\pm$0.03	& 0.6264$\pm$0.04	& 0.6265$\pm$0.01	& 0.5806$\pm$0.01	& 0.6067$\pm$0.01	& \textbf{0.3462$\pm$0.00}	& 0.5883$\pm$0.04	& \underline{0.5530$\pm$0.06}\\	
&CV& 0.5514$\pm$0.00	& 0.5465$\pm$0.00	& 0.5457$\pm$0.00	& 0.5643$\pm$0.06	& 0.5616$\pm$0.06	& 0.5450$\pm$0.00	& 0.5435$\pm$0.00	& 0.5669$\pm$0.07	& \textbf{0.5099$\pm$0.00}	& 0.5454$\pm$0.00	& \underline{0.5328$\pm$0.00}\\	
&ZO& 0.2582$\pm$0.01	& 0.2615$\pm$0.02	& 0.2616$\pm$0.01	& \underline{0.2550$\pm$0.02}	& 0.2640$\pm$0.02	& 0.2714$\pm$0.01	& 0.2566$\pm$0.01	& 0.3288$\pm$0.05	& 0.3442$\pm$0.00	& 0.2563$\pm$0.00	& \textbf{0.2498$\pm$0.01}\\

\midrule
\multicolumn{2}{c|}{Average Rank} &8.58&	7.92&	4.83&	5.83&	6.42	&7.67	&3.92	&7.08&	5.75&6.17&	1.58
\\	
\bottomrule

\end{tabular}}
\end{table*}

\begin{table*}[!t]
\caption{ACC performance of ablated DISC variants formed by successively ablating the two core components: RL, CPDs. Red arrows indicate performance degradation over the former more complete variant. }
\label{tb:ab_acc}
\centering
\resizebox{2.06\columnwidth}{!}{
\begin{tabular}{cc|l|cccccccccccc|c}
\toprule
\multicolumn{2}{c|}{DISC Components} & \multirow{2}{*}{Variants} & \multicolumn{12}{c|}{Datasets} & \multirow{2}{*}{Average} \\
\cmidrule(l{-0.15em}r){1-2} \cmidrule(lr){4-15} 
RL & CPDs &  & CA & SB & NU & CC & AP & DT & AV & OB & TA & BM & CV & ZO & Rank \\
\midrule
$\checkmark$ & $\checkmark$ & DISC & 
0.5826 &  
0.5759 &  
0.3918 &  
0.4432 &  
0.6413 &  
0.8592 &  
0.7258 &  
0.4251 &  
0.4391 &  
0.7290 &  
0.8759 &  
0.8050 &  
1.08 \\

& $\checkmark$ &${I}$ & 
~~0.5524\textcolor{darkred}{$\downarrow$} &  
~~0.5568\textcolor{darkred}{$\downarrow$} &  
~~0.3876\textcolor{darkred}{$\downarrow$} &  
~~0.4193\textcolor{darkred}{$\downarrow$} &  
0.6423&  
~~0.7003\textcolor{darkred}{$\downarrow$} &  
~~0.7018\textcolor{darkred}{$\downarrow$} &  
~~0.3963\textcolor{darkred}{$\downarrow$} &  
~~0.4377\textcolor{darkred}{$\downarrow$} &  
~~0.6893\textcolor{darkred}{$\downarrow$} &  
~~0.8722\textcolor{darkred}{$\downarrow$} &  
~~~0.8050\textcolor{gray}{$-$} &  
1.83 \\

& &${II}$ & 
~~0.3795\textcolor{darkred}{$\downarrow$} & ~~0.4959\textcolor{darkred}{$\downarrow$} & ~~0.3315\textcolor{darkred}{$\downarrow$} & ~~0.3800\textcolor{darkred}{$\downarrow$} & ~~0.5354\textcolor{darkred}{$\downarrow$} & ~~0.5972\textcolor{darkred}{$\downarrow$} & ~~0.6251\textcolor{darkred}{$\downarrow$} & ~~0.3478\textcolor{darkred}{$\downarrow$} & ~~0.4073\textcolor{darkred}{$\downarrow$} & ~~0.6395\textcolor{darkred}{$\downarrow$} & ~~0.8628\textcolor{darkred}{$\downarrow$} & ~~0.6970\textcolor{darkred}{$\downarrow$} & 3.00 \\
\bottomrule
\end{tabular}}
\end{table*}

\subsection{Clustering Performance Evaluation}
\label{sct: cpAsig}

The clustering performance of various methods is evaluated using both external (ACC and ARI) and internal (CMP) validation indices, as shown in \textbf{Table~\ref{tb:clustering}}. External validation measures how well the clustering matches the true labels, while internal validation assesses the compactness and distinguishability of the clusters.
 
\textbf{External (label-based) validation}: Overall, DISC outperforms most methods, as shown in Table~\ref{tb:clustering}. The observations include: 1) DISC consistently performs well, demonstrating its clustering superiority across all datasets. 2) On the ACC index, DISC is slightly behind the best on SB, NU, and CV datasets (differences under 0.015), still proves the competitiveness of DISC. 3) DISC performs similarly to the best methods on the ARI index, with small differences on AV, BM, and CV datasets. 4) SigDT performs well in ACC but poorly in ARI on CA and NU datasets. The reason is that it over-aggregates data into a single large cluster, inflating ACC while failing to capture inter-cluster differences, thus lowering ARI. In contrast, DISC maintains a higher ARI, demonstrating its ability to preserve true cluster structures.

\textbf{Internal validation}: As can be seen in \textbf{Table~\ref{tb:clustering}}, DISC achieves the best rank in the CMP index, indicating superior cluster compactness and distinguishability. It can be observed that DISC performs worse than SigDT in some cases (e.g., on AP, AV, BM and CV datasets). This is because on these datasets, SigDT tends to partition samples into excessively small clusters, which may surely demonstrate higher compactness but is relatively incompetent in exploring a proper number of prominent clusters, as confirmed in its weaker external validation results. It is also worth noting that the entropy-based evaluation index we use still has some limitations. That is, it actually measures the value inconsistency within different clusters by only distinguishing whether the values are identical or different. So it is relatively incompetent in precisely quantifying the effectiveness of the adopted distance measure in grouping similar samples to form compact clusters. In addition, the performance of CBDM is not reported for the NS and CE datasets, as the attributes are independent, making CBDM ineffective.

\subsection{Ablation Studies}
\label{sct: ablation}

To demonstrate the effectiveness of DISC's core components and verify the Euclidean compatibility of our distance metric, several ablated variants are compared from both algorithmic and data perspectives, as shown in Tables~\ref{tb:ab_acc} and~\ref{tb:mix_ablation}.

\textbf{From the algorithmic perspective}, we evaluate the proposed subspace category relationship learning mechanism by modifying DISC to DISC$^{I}$, which infers the category Relationship once without iterative Learning (RL). Additionally, DISC$^{II}$ adopts the traditional Hamming distance and uses the subspace Condition Probability Distribution-based measure (CPDs) for weight computation. \textbf{Table~\ref{tb:ab_acc}} compares the clustering accuracy of these variants. It can be seen that DISC outperforms the others on 11 out of 12 datasets, confirming the effectiveness of the proposed metric learning mechanism. DISC$^{I}$ with CPD-based weights consistently outperforms DISC$^{II}$ with Hamming distance, demonstrating the superiority of the probability distribution in distinguishing dissimilarity and improving clustering accuracy.

\begin{table}[!t]
\centering
\caption{Clustering Accuracy on mixed datasets, formed by adding the omitted numerical attributes back to the categorical datasets in \textbf{Table~\ref{tb:statistics}}. \textbf{Bold} indicates the best result. }

\label{tb:mix_ablation}
\resizebox{0.9\columnwidth}{!}{
\begin{tabular}{r|ccc|cc}
\toprule
Data & KPT & WOCIL & HARR  & DISC & DISC (mixed) \  \\
\midrule
CC	&0.3983&	0.3858	&0.4303&0.4432&\textbf{0.4494} \\
AP	&0.5158	&0.5723	&0.5757& 0.6413&\textbf{0.6556}	\\
DT &0.5313	&0.6901	&0.6145 & 0.8592&\textbf{0.8615} \\
AV	&0.539	&0.6733	&0.6805& 0.7258&\textbf{0.7764}	 \\
BM	&0.5865	&0.5724	&0.5222 & 0.7290&\textbf{0.7423}  \\

\bottomrule
\end{tabular}}
\end{table}

\textbf{From the data perspective}, to validate that our metric is compatible with the Euclidean distance, we combine the learned subspace distance metric for categorical attributes with the Euclidean distance for k-prototypes clustering, resulting in the DISC (mixed) method. \textbf{Table~\ref{tb:mix_ablation}} compares it with k-prototypes (KPT)~\cite{kpt} and two mixed data clustering methods, WOCIL~\cite{WOCIL} and HARR~\cite{HARR}. DISC (mixed) consistently outperforms all competing methods across the five mixed datasets. Compared to the original DISC, which processes only categorical attributes, the mixed datasets extension achieves further performance gains, demonstrating seamless integration with Euclidean distance and yielding excellent results without complex operations. This simplicity validates its compatibility with Euclidean space, as discussed in \textbf{Theorem~\ref{thm:graph_tree_consistency}}, enabling DISC to handle heterogeneous data types effectively and making it a promising solution for real-world clustering.

\begin{figure}[!t] 
    \centerline{\includegraphics[width=1.0\linewidth]{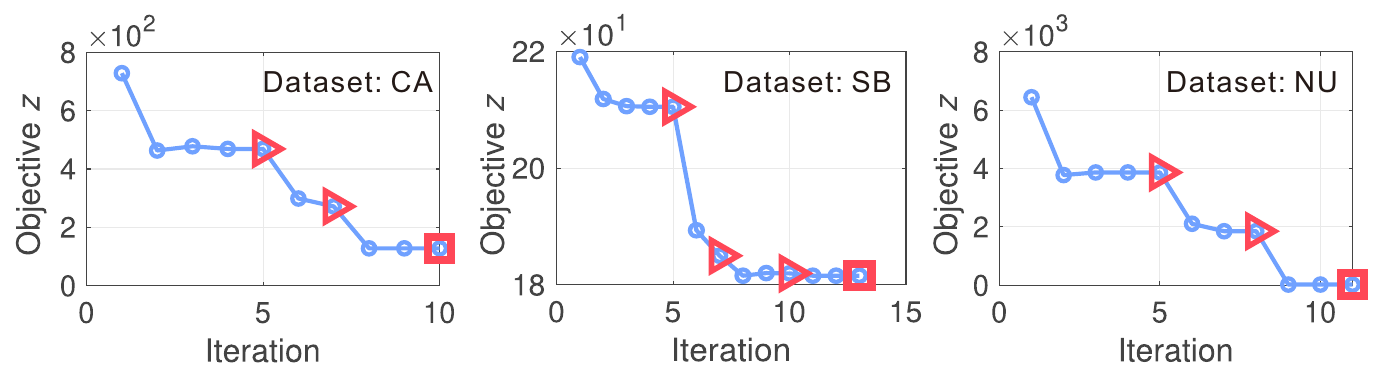}}
    \caption{Convergence curves of DISC on different datasets.} 
    \label{fig:Converge12}
\end{figure}

\begin{figure}[!t]	
 \centerline{\includegraphics[width=1.0\linewidth]{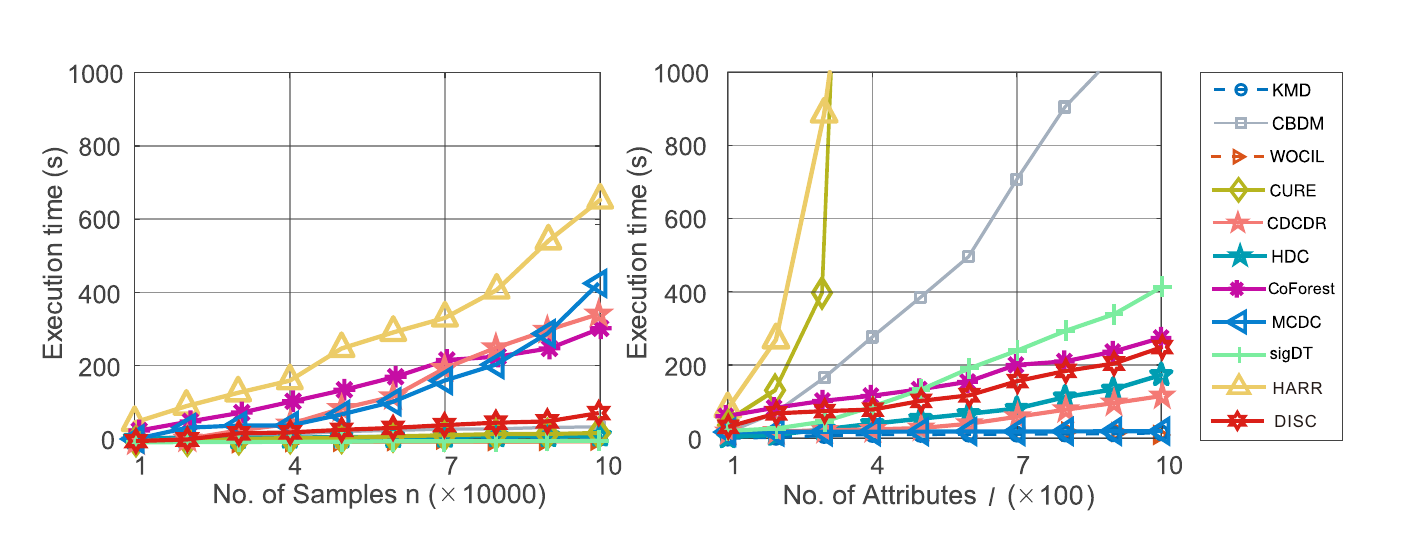}}
\caption{Execution time (y-axis) on synthetic datasets with different numbers of samples $n$ and attributes $l$ (x-axis).}	
\label{fig:time_complex}	
\end{figure}

\subsection{Converge and Efficiency Analysis}\label{sct: cAe}

\textbf{To assess the convergence} of DISC, we execute it on CA, SB, and NU datasets and present the convergence curves in \textbf{Figure~\ref{fig:Converge12}}. The horizontal axis represents the number of iterations, while the vertical axis shows the objective function value $z$. The red triangles and squares correspond to updates of the relation trees and the iterations where DISC converges, respectively. After each relation tree update, $z$ consistently decreases, indicating alignment with the optimization objective. DISC converges within 20 iterations, with a maximum of 10 updates, as expected in \textbf{Theorem~\ref{the:converge}}.

\textbf{To evaluate the efficiency of DISC}, large synthetic datasets are generated with varying attributes and sample sizes. Two experiments are conducted: 1) Fixing the number of attributes at $l = 20$ and varying the sample size $n$ from 10,000 to 100,000; 2) Fixing the sample size at $n = 2,000$ and varying the number of attributes $l$ from 100 to 1000. Each attribute has five values, and $k = 5$. Execution times for all 11 methods are shown in \textbf{Figure~\ref{fig:time_complex}}. DISC outperforms HARR, MCDC, CDCDR, and CoForest on large datasets and HARR, CURE, CBDM, sigDT, and CoForest on high-dimensional datasets. The execution time of DISC increases linearly with the number of samples or attributes, consistent with the time complexity analysis. In summary, DISC is efficient compared to State-Of-Art (SOTA) methods and incurs low computational cost.

\begin{figure}[!t]
    \centerline{\includegraphics[width=3.1in]{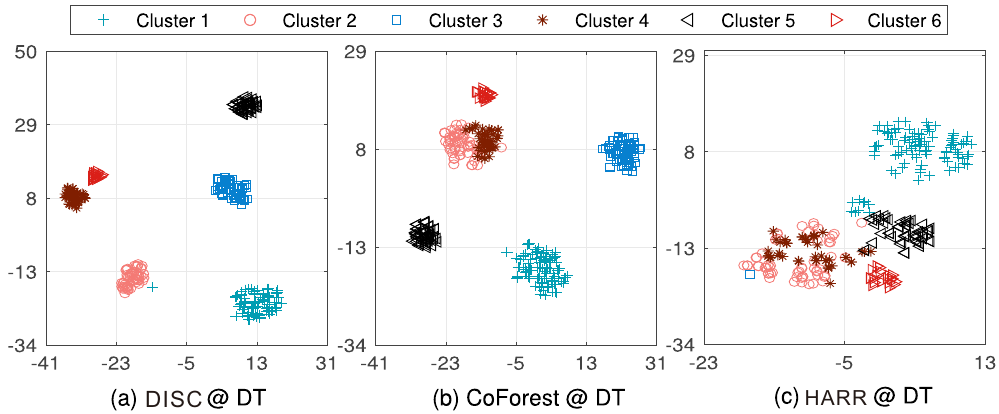}}
    \caption{t-SNE visualization of the DT dataset.}
    \label{fig:Instudy}
\end{figure}

\subsection{Qualitative Results}\label{sct: qa}

\textbf{To demonstrate the cluster discrimination ability} of the DISC method, the attribute distances learned by DISC, CoForest, and HARR are utilized to encode the attributes of the DT dataset for 2-dimensional t-SNE~\cite{r1visual} visualization. It can be seen from \textbf{Figure~\ref{fig:Instudy}} that DISC has significantly better cluster discrimination ability, as it performs clustering task-oriented distance learning to better suit the exploration of the $k$ clusters of the DT dataset.

\section{Concluding Remarks}
\label{sct:Conclude}

This paper addresses a key challenge in categorical data clustering, where existing distance learning methods are limited by uniform category relationships across different attributes and clusters. A novel learning framework named DISC is introduced to integrate subspace category relationships into clustering and jointly optimize them. During the learning process, DISC infers more appropriate subspace category relationships based on current partition, leading to more compact clusters and superior accuracy. Theoretical analysis shows that the designed optimization algorithm of DISC converges quickly with guarantee, and also proves that the learned category relationships are both deterministic and Euclidean-compatible. Extensive experiments on 12 benchmark datasets with statistical evidence show that the proposed DISC significantly outperforms various state-of-the-art methods. By bridging the gap in distance metric formulation between categorical data and clustering analysis, DISC lays a solid foundation for addressing critical subsequent challenges in categorical and mixed data clustering, including the handling of high-dimensional and noisy data as well as the automatic determination of the optimal number of clusters, which have yet to be fully explored both in the literature and in our present study.

\section{Acknowledgments}
This work was supported in part by the National Natural Science Foundation of China (NSFC) under grants: 62476063, 62376233 and 62306181, the NSFC/Research Grants Council (RGC) Joint Research Scheme under the grant N\_HKBU214/21, the Natural Science Foundation of Guangdong Province under grant: 2025A1515011293, the Natural Science Foundation of Fujian Province under grant: 2024J09001, the National Key Laboratory of Radar Signal Processing under grant: JKW202403, the General Research Fund of RGC under grants: 12201321, 12202622, and 12201323, the RGC Senior Research Fellow Scheme under grant: SRFS2324-2S02, Guangdong and Hong Kong Universities ``1+1+1'' Joint Research Collaboration Scheme with the grant: 2025A0505000004, the grant for Faculty Niche Research Areas of Hong Kong Baptist University with the code: RC-FNRA-IG/23-24/SCI/02, the Guangdong Higher Education Upgrading Plan (2021–2025), the Shenzhen Science and Technology Program under grant: RCBS20231211090659101 and the Xiaomi Young Talents Program.
\bibliography{aaai2026}

\appendix

\section{Lemmas, Theromes and Algorithm}

\subsection{Deterministic and Euclidean-compatible Distance Metric}

\begin{lemma}
    \label{lm:tree}
    The relation tree $\mathcal{T}_{j,r}$ inferred from the fully connected graph $G_{j,r}$ retains all pairwise distances among nodes is deterministic and unique.
\end{lemma}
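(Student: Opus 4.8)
The plan is to exploit the special algebraic form of the edge weights produced by Eq.~(\ref{eq:compute_w}). Writing $a_u := p(v_r^u \mid C_j)$ for each value $v_r^u$, every edge weight is $\mathbf{W}_{j,r}(u,s) = |a_u - a_s|$, so the complete graph $G_{j,r}$ is nothing but the distance graph of the $o_r$ scalars $a_1,\dots,a_{o_r}$ sitting on the real line. The whole lemma then reduces to understanding the minimum spanning tree of a finite point set on $\mathbb{R}$, which I would analyze directly.

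First I would show that $\mathcal{T}_{j,r}$ is the ``sorted path''. Relabel so that $a_{(1)} \le a_{(2)} \le \dots \le a_{(o_r)}$ and let $P$ be the path joining $v_r^{(i)}$ to $v_r^{(i+1)}$ for $i = 1,\dots,o_r-1$, whose total weight telescopes to $a_{(o_r)} - a_{(1)}$. For each $i$, consider the cut separating $\{v_r^{(1)},\dots,v_r^{(i)}\}$ from the remaining nodes: any crossing edge $\{v_r^{(p)},v_r^{(q)}\}$ with $p \le i < q$ has weight $a_{(q)} - a_{(p)} \ge a_{(i+1)} - a_{(i)}$, so the consecutive edge $\{v_r^{(i)},v_r^{(i+1)}\}$ attains the minimum across that cut, and it is the \emph{unique} minimiser whenever the $a_u$'s are pairwise distinct. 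By the cut property of minimum spanning trees, each of these $o_r-1$ consecutive edges lies in every MST of $G_{j,r}$; since a spanning tree on $o_r$ nodes has exactly $o_r - 1$ edges, $\mathcal{T}_{j,r} = P$ and it is pinned down uniquely by $G_{j,r}$, giving both determinism and uniqueness.

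Next I would establish that $\mathcal{T}_{j,r}$ retains all pairwise distances. In $P$, the unique tree path between $v_r^u$ and $v_r^s$ is the run of consecutive edges spanning the interval $[\min(a_u,a_s),\max(a_u,a_s)]$, so by Eq.~(\ref{eq: tree-order_distance}) the relation-tree distance telescopes to $|a_u - a_s| = \mathbf{W}_{j,r}(u,s)$. Since $x \mapsto |x - a_s|$ obeys the triangle inequality, the direct edge is already a shortest path in $G_{j,r}$, so $\mathcal{T}_{j,r}$ reproduces the shortest-path distance of $G_{j,r}$ between \emph{every} pair of nodes. As a by-product, $P$ embeds isometrically on $\mathbb{R}$ via $v_r^u \mapsto a_u$, which is the one-dimensional linear structure that \textbf{Theorem~\ref{thm:graph_tree_consistency}} relies on for Euclidean compatibility.

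The one delicate point — and the main thing to argue carefully — is ties, i.e.\ $a_u = a_s$ for some $u \ne s$, in which case the minimiser across some cut is not unique and $G_{j,r}$ can admit several MSTs. I would resolve this by observing that any two such MSTs differ only in how they interconnect nodes carrying the same scalar value, and that the telescoping computation above is unaffected by that choice, so the induced relation-tree distance is still exactly $|a_u - a_s|$ and is therefore well defined; adopting any fixed deterministic tie-break (for instance lexicographic order on value indices, as a standard MST routine already does) then restores literal uniqueness of the tree itself. Everything else — the telescoping sums and the triangle-inequality check — is routine.
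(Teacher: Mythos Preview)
Your proposal is correct and follows essentially the same route as the paper: sort the nodes by their conditional probabilities $a_u=p(v_r^u\mid C_j)$, argue that the MST is the path through consecutive sorted values, and then telescope to show the tree distance between any two nodes equals $|a_u-a_s|$, hence all pairwise distances are preserved and the tree is determined. Your version is in fact more careful than the paper's---you invoke the cut property explicitly and you handle the tie case $a_u=a_s$, which the paper silently excludes by writing strict inequalities---but the underlying argument is the same.
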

\begin{proof}
    Given a fully connected graph $G_{j,r}$, for the convenience of analysis without loss of generality, we first reindex the nodes such that their conditional probabilities satisfy: 
    \begin{equation}
         p(v_r^1 | C_j) < p(v_r^2 | C_j) < \dots< p(v_{r}^{o_r} | C_j),
    \end{equation}
    where the superscript indices $1, 2, \ldots, o_r$, now explicitly represent the ranking order of probabilities. 
    The edge with the smallest weight is selected at each step to incrementally form $ \mathcal{T}_{j,r}$ according to Eq.~(6). Since $\mathbf{W}_{j,r}(u,u+1)$ between adjacent nodes $v_r^{u}$ and $v_{r}^{u+1}$ exhibits the local minimum weight, the edge $b_r^{u,u+1}$ is included in the edge set of $\mathcal{T}_{j,r}$. Consequently, all pairs of adjacent nodes are sequentially connected, yielding a tree structure. Assuming that $s > u$, the distance between any two nodes $ v_r^{u}$ and $ v_r^{s}$ is then given by:
    \begin{equation}
    \begin{aligned}
    \mathbf{W}_{j,r}(u,s) &= \sum_{\eta=u}^{s-1} \left( p(v_{r}^{\eta+1} | C_j) - p(v_{r}^{\eta} | C_j) \right) \\
    &= |{p}({v_r^{u}}| {C_j}) - p({v_r^{s}|{C_j}})|
    \end{aligned}.
    \end{equation}
    This matches the direct edge weight in the original graph \( G_{j,r} \), demonstrating that the relation tree introduces no information loss. This preservation implies that no alternative tree structure can yield the same set of pairwise distances unless it is identical to \( \mathcal{T}_{j,r} \). Therefore, both the structure and the induced metric are uniquely determined by the conditional probability distribution, confirming the determinism of the construction.
    \end{proof}

\begin{lemma}
    \label{lm:linear}
        The distance metric induced by the relation tree $\mathcal{T}_{j,r}$ always forms a linear structure, which is naturally compatible with the one-dimensional Euclidean geometry.
    \end{lemma}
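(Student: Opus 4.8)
The plan is to show that the relation tree $\mathcal{T}_{j,r}$ is in fact a path graph (a chain), and that embedding its nodes on the real line at their conditional-probability coordinates realizes the tree metric exactly as a one-dimensional Euclidean metric. I would begin by invoking Lemma~\ref{lm:tree}: the reindexing that sorts the nodes by increasing conditional probability, $p(v_r^1|C_j) < p(v_r^2|C_j) < \dots < p(v_r^{o_r}|C_j)$, together with the MST construction, guarantees that each edge $b_r^{u,u+1}$ joining consecutive nodes is selected, so $\mathcal{T}_{j,r}$ is exactly the chain $v_r^1 - v_r^2 - \dots - v_r^{o_r}$. Because a chain has no branching, the topology is already ``linear'' in the graph-theoretic sense; the remaining content is to certify that this linearity is metric, not just combinatorial.

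Next I would define the explicit embedding $\iota : V_r \to \mathbb{R}$ by $\iota(v_r^u) = p(v_r^u | C_j)$ and verify that $|\iota(v_r^u) - \iota(v_r^s)| = \mathbf{D}_{j,r}(u,s)$ for every pair $u,s$. This follows directly from the telescoping identity already established in the proof of Lemma~\ref{lm:tree}: for $s > u$, the tree distance $\mathbf{D}_{j,r}(u,s) = \sum_{\eta=u}^{s-1}\big(p(v_r^{\eta+1}|C_j) - p(v_r^\eta|C_j)\big) = p(v_r^s|C_j) - p(v_r^u|C_j)$, which is precisely the one-dimensional Euclidean distance between the images. Hence $\iota$ is an isometric embedding of $(V_r, \mathbf{D}_{j,r})$ into $(\mathbb{R}, |\cdot|)$, which is exactly the statement that the induced metric is one-dimensional Euclidean. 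I would also remark that the embedding is order-preserving, so the linear arrangement of the nodes on the line coincides with their ordering along the chain $\mathcal{T}_{j,r}$, closing the loop with the structural claim.

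The main obstacle — really the only subtle point — is handling possible ties in the conditional probabilities, since the strict inequalities assumed in Lemma~\ref{lm:tree} may fail when two distinct categories happen to have equal within-cluster frequencies. I would address this by noting that equal probabilities give a zero-weight edge, so such nodes collapse to the same coordinate under $\iota$; the MST is then non-unique as a set of edges, but every choice yields the same pairwise distance matrix $\mathbf{D}_{j,r}$ (all still matching $|p(v_r^u|C_j) - p(v_r^s|C_j)|$), so the induced metric — a pseudometric if one insists on distinguishing the tied nodes, a genuine metric on equivalence classes — is still exactly realized on the line. Thus determinism of the metric (as opposed to the edge set) is preserved, and the Euclidean-compatibility conclusion is unaffected. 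With Lemmas~\ref{lm:tree} and~\ref{lm:linear} in hand, Theorem~\ref{thm:graph_tree_consistency} follows immediately: determinism comes from Lemma~\ref{lm:tree} and Euclidean-compatibility from Lemma~\ref{lm:linear}, and I would close the proof with that one-line assembly.
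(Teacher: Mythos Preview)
Your proof is correct and reaches the same conclusion, but the argument structure differs from the paper's. The paper proves the chain structure by induction on $o_r$: the base case $o_r=2$ is trivial, and the inductive step argues that adding a node $v_r^{\omega+1}$ with the largest conditional probability forces the MST to attach it to $v_r^{\omega}$, preserving linearity; Euclidean compatibility is then asserted in a single closing sentence without constructing an explicit embedding. You instead import the chain structure directly from the proof of Lemma~\ref{lm:tree} (which already identifies the selected edges as the consecutive $b_r^{u,u+1}$) and devote the effort to building the explicit isometry $\iota(v_r^u)=p(v_r^u\mid C_j)$ and verifying it via the telescoping identity. Your route avoids re-deriving what Lemma~\ref{lm:tree} has effectively established and makes the Euclidean-compatibility claim genuinely metric rather than declarative; it also treats the tie case, which the paper sidesteps by silently assuming strict inequalities throughout. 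The paper's induction is more self-contained in that it does not ask the reader to notice that Lemma~\ref{lm:tree} already pinned down the edge set, but it is correspondingly somewhat redundant with that lemma.
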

\begin{proof}
        To rigorously prove that the inferred tree $\mathcal{T}_{j,r}$ is linear, we have the following three steps: 
        
        1) Base Case: When $o_r = 2$, the statement trivially holds.
        
        2) Inductive Hypothesis: Assume the proposition holds for some arbitrary natural number \( o_r = \omega \), and reindex the nodes such that:
        \begin{equation}
            p(v_r^1 | C_j) < p(v_r^2 | C_j) < \dots< p(v_{r}^{\omega} | C_j),
        \end{equation}
        the nodes $ v_r^1, v_r^2, \dots, v_{r}^{\omega} $ form a linear structure, i.e., $v_r^u$ connects $v_r^{u+1}$ for $1< u< \omega$.

        3) Inductive Step: We now prove that the proposition holds for $ o_r = \omega +1$. Without loss of generality, assume that the probability of the newly added node $ v_{r}^{\omega+1} $, denoted as $p(v_{r}^{\omega+1} | C_j)$, satisfies $p(v_{r}^{\omega} | C_j) < p(v_{r}^{\omega+1} | C_j)$.
        In this case, the edge $ w_{j,r}^{\omega,\omega+1} $ is the smallest, and thus, according to Eq.~(6), $ v_{r}^{\omega} $ and $ v_{r}^{\omega+1} $ are connected, leading to a linear structure represented as $ v_r^1 $, $ v_r^2 $, ..., $ v_{r}^{\omega}, v_{r}^{\omega+1}$.

        Thus, the tree consisting of $\omega+1$ nodes remains linear, which is consistent with Euclidean distance in \( \mathbb{R}^1 \), allowing for seamless integration between the two. As a result, the metric induced by the relation tree is Euclidean-compatible.

    \end{proof}

According to Lemmas~\ref{lm:tree} and ~\ref{lm:linear}, we have Theorem~\ref{thm:graph_tree_consistency}. 
\begin{theorem}
\label{thm:graph_tree_consistency}
The relation tree \( \mathcal{T}_{j,r} \) inferred from the fully connected graph \( G_{j,r} \) defines a deterministic, Euclidean-compatible distance metric over the attribute values.

\end{theorem}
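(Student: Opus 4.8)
The plan is to decompose Theorem~\ref{thm:graph_tree_consistency} into the two properties it asserts --- \emph{determinism/uniqueness} and \emph{Euclidean compatibility} --- and to handle each via a dedicated lemma, exactly matching the structure promised in the proof sketch (``A detailed proof can be found in Appendix A.1''). The crucial preliminary observation, which I would establish first, is that reindexing the $o_r$ possible values of attribute $\mathbf{a}_r$ so that their conditional probabilities $p(v_r^u \mid C_j)$ are sorted in increasing order costs nothing (it is only a relabeling) but exposes a total order on the nodes. Under this ordering, the edge weight $\mathbf{W}_{j,r}(u,s) = |p(v_r^u\mid C_j) - p(v_r^s\mid C_j)|$ becomes, for $u < s$, the length of a real interval, so weights are additive along the chain $v_r^u \!-\! v_r^{u+1} \!-\! \cdots \!-\! v_r^s$. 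This is the single fact that drives both halves of the argument.

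For determinism (Lemma~\ref{lm:tree}), I would run Kruskal/Prim on $G_{j,r}$ and argue that at every step the globally minimum available edge is one joining two nodes adjacent in the probability ordering: indeed for any non-adjacent pair $u<s$ the weight $\sum_{\eta=u}^{s-1}(p(v_r^{\eta+1}\mid C_j)-p(v_r^{\eta}\mid C_j))$ strictly exceeds each summand, so a shorter edge (the adjacent one) is always present. Hence the MST is forced to be the path $v_r^1 - v_r^2 - \cdots - v_r^{o_r}$. I would then verify the key ``no information loss'' identity: the unique tree path between $v_r^u$ and $v_r^s$ has total length $\sum_{\eta=u}^{s-1}(p(v_r^{\eta+1}\mid C_j)-p(v_r^{\eta}\mid C_j)) = |p(v_r^u\mid C_j) - p(v_r^s\mid C_j)|$, which equals the original direct edge weight $\mathbf{W}_{j,r}(u,s)$ by telescoping. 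Because the full pairwise-distance matrix of $\mathcal{T}_{j,r}$ coincides with $\mathbf{W}_{j,r}$, and a metric determines at most one tree realizing it as shortest-path lengths when that tree is a path, the structure and the induced metric are uniquely pinned down by the CPDs. One subtlety to flag: if two values share the same conditional probability the strict inequalities degenerate and MST ties appear; I would note that any tie-breaking yields isometric trees, so determinism of the \emph{metric} (as opposed to the labeled structure) survives, or alternatively assume distinct probabilities as the lemma statement implicitly does.

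For Euclidean compatibility (Lemma~\ref{lm:linear}), I would give the induction on $o_r$ that the excerpt outlines: the base case $o_r=2$ is a single edge, hence trivially a segment in $\mathbb{R}^1$; assuming the first $\omega$ sorted nodes form a path, inserting the $(\omega{+}1)$-st node --- whose probability, after re-sorting, sits at one end, say above $p(v_r^\omega\mid C_j)$ --- attaches it to $v_r^\omega$ via the now-smallest new edge, extending the path. Then the embedding $v_r^u \mapsto p(v_r^u\mid C_j) \in \mathbb{R}$ is an isometry from $(\{v_r^u\}, \mathbf{D}_{j,r})$ into the one-dimensional Euclidean line, so the metric is literally realized by Euclidean distance; this is what licenses concatenating it with the numerical Euclidean block in the mixed-data extension. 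Finally, Theorem~\ref{thm:graph_tree_consistency} follows by simply conjoining Lemmas~\ref{lm:tree} and~\ref{lm:linear}. I do not expect a genuine obstacle here --- the mathematics is elementary once the sorting trick is in place --- so the ``hard part'' is really expository: making precise what ``Euclidean-compatible'' means (I would define it as: the distance matrix is a Euclidean distance matrix, equivalently admits an isometric embedding into some $\mathbb{R}^d$, here $d=1$), and being careful that the induction's re-sorting step when the new node lands in the \emph{interior} of the existing order is handled --- in that case the new node is adjacent to two existing consecutive nodes and one must check the previously-present edge between those two is replaced, which again follows from the interval-additivity of weights and keeps the tree a path.
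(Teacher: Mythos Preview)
Your proposal is correct and follows essentially the same approach as the paper: decompose into a determinism lemma and a linearity/Euclidean-compatibility lemma, both driven by sorting the nodes by $p(v_r^u\mid C_j)$ and exploiting the telescoping identity $\sum_{\eta=u}^{s-1}(p(v_r^{\eta+1}\mid C_j)-p(v_r^{\eta}\mid C_j)) = |p(v_r^u\mid C_j)-p(v_r^s\mid C_j)|$, with the second lemma proved by induction on $o_r$. You are in fact slightly more careful than the paper in two places --- you flag the tie case (equal conditional probabilities) and the interior-insertion case in the induction --- whereas the paper silently assumes strict inequalities and handles the inductive step only under the WLOG that the new node has the largest probability; your explicit isometric embedding $v_r^u\mapsto p(v_r^u\mid C_j)\in\mathbb{R}$ is also a cleaner way to state Euclidean compatibility than the paper's phrasing.
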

\begin{proof}
    It follows from Lemmas~\ref{lm:tree} and \ref{lm:linear} that $\mathcal{T}_{j,r}$ can be simplified into a linear structure with $o_r - 1$ edges connecting $o_r$ nodes without loss of information, and the resulting distance metric is both deterministic and Euclidean compatible.   
\end{proof}

\subsection{Valid Distance Metric}

\begin{lemma}\label{lm:d_i_r_g}
The distance measure $\mathbf{D}_{j,r}(u,s)$ defined within the context of the relation tree $\mathcal{T}_{j,r}$ indeed constitutes a valid distance metric.
\end{lemma}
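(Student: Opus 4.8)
The plan is to reduce the claim to elementary properties of the absolute value on the real line, exploiting the linear structure of the relation tree already established in Lemmas~\ref{lm:tree} and~\ref{lm:linear}. First I would recall that $\mathbf{D}_{j,r}(u,s)$ is, by Eq.~(\ref{eq: tree-order_distance}), the sum of edge weights along the unique path joining $v_r^u$ and $v_r^s$ in $\mathcal{T}_{j,r}$; combining this with the telescoping identity from the proof of Lemma~\ref{lm:tree} and the strict reindexing of the conditional probabilities yields the closed form
\begin{equation}
  \mathbf{D}_{j,r}(u,s) = \left| p(v_r^u | C_j) - p(v_r^s | C_j) \right|.
\end{equation}
In other words, $\mathbf{D}_{j,r}$ is the pullback of the standard one-dimensional Euclidean metric under the embedding $v_r^u \mapsto p(v_r^u | C_j)$, so the metric property becomes essentially a restatement of Theorem~\ref{thm:graph_tree_consistency}.

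Second, I would verify the four axioms directly from this expression. Non-negativity and symmetry are immediate, since $|a|\ge 0$ and $|a-b|=|b-a|$. The identity of indiscernibles follows because $\mathbf{D}_{j,r}(u,s)=0$ forces $p(v_r^u | C_j)=p(v_r^s | C_j)$, which under the strict-ordering convention of Lemma~\ref{lm:tree} holds exactly when $u=s$. For the triangle inequality, given a third value $v_r^t$ I would apply the scalar inequality $|a-c|\le|a-b|+|b-c|$ with $a=p(v_r^u | C_j)$, $b=p(v_r^t | C_j)$, $c=p(v_r^s | C_j)$; alternatively, a purely combinatorial argument observes that the path from $v_r^u$ to $v_r^s$ in $\mathcal{T}_{j,r}$ is a sub-walk of the concatenation of the paths from $v_r^u$ to $v_r^t$ and from $v_r^t$ to $v_r^s$, so non-negativity of the edge weights gives $\mathbf{D}_{j,r}(u,s)\le\mathbf{D}_{j,r}(u,t)+\mathbf{D}_{j,r}(t,s)$.

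The only delicate point, and the step I would treat with the most care, is the identity of indiscernibles when two genuinely distinct categories happen to share the same conditional probability within $C_j$, in which case $\mathbf{D}_{j,r}$ is strictly speaking only a pseudometric on $V_r$. I would dispose of this exactly as the construction in Lemma~\ref{lm:tree} implicitly does: the strict-ordering convention amounts to breaking ties, or equivalently identifying equiprobable categories, and since every downstream quantity in Eqs.~(\ref{eq:dist_r})--(\ref{eq:obj}) depends on a category only through its conditional probability, passing to this quotient changes nothing while making $\mathbf{D}_{j,r}$ a bona fide metric. With this caveat handled, all four axioms hold, which establishes Lemma~\ref{lm:d_i_r_g}; the result then feeds directly into Theorem~\ref{thm:sample_cluster_distance}, where $\Phi$ is exhibited as a non-negatively weighted sum of the $\mathbf{D}_{j,r}$ terms.
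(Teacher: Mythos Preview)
Your proof is correct but takes a genuinely different route from the paper. The paper argues directly from the tree structure without invoking the closed form: non-negativity comes from the edge weights in Eq.~(4) being absolute values; symmetry comes from $\mathcal{T}_{j,r}$ being undirected; and the triangle inequality is argued combinatorially by noting that the path between $v_r^u$ and $v_r^s$ in a tree is unique, so any detour through a third node $v_r^g$ can only pick up additional non-negative edge weights. The paper checks only these three axioms and does not mention identity of indiscernibles at all.

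Your approach instead collapses everything to $|p(v_r^u|C_j)-p(v_r^s|C_j)|$ via Lemmas~\ref{lm:tree}--\ref{lm:linear} and reads off the axioms from the absolute value on $\mathbb{R}$. This buys a shorter verification and, more importantly, forces you to confront the identity-of-indiscernibles axiom, where you correctly flag that ties in conditional probability make $\mathbf{D}_{j,r}$ only a pseudometric unless one adopts the strict-ordering or quotient convention. That caveat is real and the paper silently sidesteps it; your treatment is in this respect more careful. The paper's argument, on the other hand, is self-contained at the level of tree paths and does not need to re-invoke the linear-structure lemmas, which keeps Lemma~\ref{lm:d_i_r_g} logically independent of Theorem~\ref{thm:graph_tree_consistency}.
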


\begin{proof}
The distance measure $\mathbf{D}_{j,r}(u,s)$ follows non-negativity, symmetry, and triangle inequality for any $r\in\{1,2,...,l\}$ and $u,s,g\in\{1,2,...,o_r\}$ as shown below.\\
1) \textbf{Non-negativity:} $\mathbf{D}_{j,r}(u,s)\geq 0 $. $\mathbf{D}_{j,r}(u,s)$ denotes a distance, which is always a non-negative weight according to Eqs.~(4) and~(7) in the submitted paper;\\
2) \textbf{Symmetry}: $\mathbf{D}_{j,r}(u,s) = \mathbf{D}_{j,r}(s,u) $. Since the relation tree is an undirected graph, the weights extracted from the undirected graph obey the commutative law for their summation;\\
3) \textbf{Triangle inequality}: $\mathbf{D}_{j,r}(u,s) \le \mathbf{D}_{j,r}(u,g) + \mathbf{D}_{j,r}(g,s)$. The shortest path between two values with length $\mathbf{D}_{j,r}(u,s)$ on the relation tree is the unique path. If we detour from $v_r^{u}$ to $v_r^{s}$ via another node $v_r^{g}$ that is not on the shortest path between nodes $v_r^{u}$ and $v_r^{s}$, it necessarily involves extra weight(s) from the other paths. Given that each weight is non-negative, the result follows.
\end{proof}

According to Lemma~\ref{lm:d_i_r_g}, we have the Theorem~\ref{thm:sample_cluster_distance}.

\begin{theorem}\label{thm:sample_cluster_distance}
The sample-cluster distance $ \Phi (\mathbf{x}_i, \mathbf{m}_j; \mathcal{T}_j) $ defined in the context of $\mathcal{T}_j$ represents a valid distance metric.
\end{theorem}

\begin{proof}
The computation of $ \Phi (\mathbf{x}_i, \mathbf{m}_j; \mathcal{T}_j) $ can be viewed as the weighted sum of a series of $\mathbf{D}_{j,r}(u,s)$ with non-negative weights represented by the probabilities in $p(v_r^{u}|C_j)$ according to Eqs.~(8) and~(9) in the submitted paper. Since distance $\mathbf{D}_{j,r}(u,s)$ is a metric according to Lemma~\ref{lm:d_i_r_g}, $ \Phi (\mathbf{x}_i, \mathbf{m}_j; \mathcal{T}_j) $ is also a metric following non-negativity, symmetry, and triangle inequality. The result follows.
\end{proof}

\subsection{Converge Guarantee}

\begin{lemma}\label{lm:inner}
The inner loop attains a local minimum of $ z(\mathbf{H}, M, \mathcal{T}) $ within a finite number of iterations when $\mathcal{T}$ is held constant.
\begin{proof}
    Given $X$ with finite numbers of samples $ n $ and clusters $ k $, the possible configurations of the partition $\mathbf{H}$ and center set $ M $ are also finite. Consequently, each distinct pair $(\mathbf{H}, M)$ can appear no more than once during the iterative process. Consider two distinct iterations $ \mathcal{I}_1 $ and $ \mathcal{I}_2 $ such that $\mathbf{H}^{\{\mathcal{I}_1\}} = \mathbf{H}^{\{\mathcal{I}_2\}}$ and $ M^{\{\mathcal{I}_1\}} = M^{\{\mathcal{I}_2\}} $. It follows that $ z(\mathbf{H}^{\{\mathcal{I}_1\}}, M^{\{\mathcal{I}_1\}}, \mathcal{T}^{\{E\}}) = z(\mathbf{H}^{\{\mathcal{I}_2\}}, M^{\{\mathcal{I}_2\}}, \mathcal{T}^{\{E\}}) $. However, the values of the objective function $\{z(\mathbf{H}^{\{1\}}, M^{\{1\}},\mathcal{T}^{\{E\}})$, $z(\mathbf{H}^{\{2\}}, M^{\{2\}},\mathcal{T}^{\{E\}})$, $\ldots$, $z(\mathbf{H}^{\{I\}}, M^{\{I\}}, \mathcal{T}^{\{E\}})\}$ produced by the inner loop exhibit strictly decreasing. Thus, the inner loop is guaranteed to converge to a local minimum in a finite number of steps.
\end{proof}
\end{lemma}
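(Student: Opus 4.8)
The plan is to treat the inner loop, with $\mathcal{T}$ held fixed, as a block-coordinate descent on $z(\mathbf{H}, M, \mathcal{T})$ that alternates between updating the partition $\mathbf{H}$ and the centroid set $M$, and then to combine a monotone-descent property with finiteness of the state space through a pigeonhole argument. First I would observe that freezing $\mathcal{T}$ freezes the induced tree distances $\mathbf{D}_{j,r}$ of Eq.~(\ref{eq: tree-order_distance}), so the objective in Eq.~(\ref{eq:obj}) becomes a fixed function of $(\mathbf{H}, M)$ alone throughout the loop. This reduces the lemma to a classical $k$-modes-style convergence statement for a single, fixed dissimilarity, and lets me argue entirely in terms of that fixed objective.

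Next I would establish that neither of the two sub-steps increases $z$. For the assignment step, Eq.~(\ref{eq:obj_pre}) decomposes the objective as a sum of per-sample terms $\sum_i h_{i,j}\,\Phi(\mathbf{x}_i,\mathbf{m}_j)$ subject to $\sum_j h_{i,j}=1$; assigning each $\mathbf{x}_i$ to its nearest centroid exactly as in Eq.~(\ref{eq:qim_pre}) minimizes each summand independently over all admissible $\mathbf{H}$, so $z$ cannot increase. For the centroid step, with $\mathbf{H}$ fixed, $z$ decomposes further over pairs $(j,r)$ into $\sum_{\mathbf{x}_i\in C_j}\mathbf{D}_{j,r}(x_{i,r},m_{j,r})$, and I would argue that replacing $m_{j,r}$ by the mode of $\mathbf{a}_r$ restricted to $C_j$ attains the minimum of this term, invoking the metric structure of $\mathbf{D}_{j,r}$ guaranteed by Lemma~\ref{lm:d_i_r_g}.

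I would then close the argument by finiteness. Each centroid coordinate $m_{j,r}$ must equal one of the finitely many possible values in $V_r$, and $\mathbf{H}$ is one of finitely many hard partitions of $n$ samples into $k$ groups, so $(\mathbf{H}, M)$ ranges over a finite configuration set, while the objective sequence is non-increasing and bounded below by $0$. Whenever a step alters $(\mathbf{H}, M)$ it strictly decreases $z$, so no configuration can recur; together with finiteness this forces the loop to reach a fixed point after finitely many iterations, and at such a point neither sub-step can lower $z$, giving the claimed local minimum. The step I expect to be the main obstacle is the centroid update: I must verify that the mode genuinely minimizes $\sum_{\mathbf{x}_i\in C_j}\mathbf{D}_{j,r}(x_{i,r},m_{j,r})$ under the tree-induced, probability-weighted distance rather than merely under the Hamming distance of standard $k$-modes, since $\mathbf{D}_{j,r}$ weights disagreements by conditional-probability gaps along the relation tree. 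Once this descent property of the centroid step is secured, the finiteness-plus-strict-descent conclusion is routine.
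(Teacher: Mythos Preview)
Your plan follows the same skeleton as the paper's proof—finite configuration space together with strict descent forces termination—but you go further by trying to verify non-increase for each coordinate sub-step, whereas the paper simply asserts that the objective sequence is strictly decreasing. You are right to single out the centroid update as the crux, and in fact the verification you propose there cannot go through. By Lemmas~\ref{lm:tree} and~\ref{lm:linear} the relation tree is a line and $\mathbf{D}_{j,r}(u,s)=|q_u-q_s|$ with $q_u:=p(v_r^u\mid C_j)$ fixed by $\mathcal{T}$, so the centroid sub-problem $\min_{m}\sum_{\mathbf{x}_i\in C_j}\mathbf{D}_{j,r}(x_{i,r},m)$ is a weighted $1$-median on $\mathbb{R}$, whose minimizer is the weighted \emph{median} of the $q_u$'s (with weights given by the current counts), not the mode $\arg\max_u n_u$. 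For instance, with $q=(0.2,\,0.35,\,0.45)$ and current counts proportional to $q$, the mode $v_3$ gives per-sample cost $0.085$, while $v_2$ gives $0.075$. The metric axioms of Lemma~\ref{lm:d_i_r_g} you plan to invoke do not help here: already on $\mathbb{R}$ with the usual metric the $1$-median is the median, not the mode.

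Consequently the descent property of the centroid step fails in general, and with it the strict-decrease argument; this is equally a gap in the paper's own proof, which never justifies that step. To repair the argument you would need either to replace the mode by the weighted median in the centroid update (making it an exact coordinate minimization so that the standard block-descent reasoning applies), or to give a termination argument that does not rely on monotonicity. The latter amounts to ruling out cycles of length greater than one in the deterministic map $\mathbf{H}\mapsto\mathbf{H}'$, which is precisely what descent was meant to supply and does not follow from finiteness alone.
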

    
\begin{lemma}\label{lm:outer}
The outer loop $E$ converges to a local minimum of $z(\mathbf{H}, M, \mathcal{T})$ in a finite number of iterations.
\begin{proof}
    Note that the subspace category relationships $\mathcal{T}$ reside in a DISCrete space, implying that the number of possible configurations for $\mathcal{T}$ is finite. Additionally, the number of possible cluster partition $\mathbf{H}$ and center set $M$ is finite for $X$ consisting of $n$ samples and $k$ clusters. Therefore, the number of distinct combinations of $\mathbf{H}$, $M$, and $\mathcal{T}$ is also finite. Each possible combination of $\mathbf{H}$, $M$, and $\mathcal{T}$ appears at most once during the execution of the algorithm. Suppose that there exist two distinct iterations $E_1$ and $E_2$, where $\mathbf{H}^{\{E_1\}} = \mathbf{H}^{\{E_2\}}$, $M^{\{E_1\}} = M^{\{E_2\}}$, and $\mathcal{T}^{\{E_1\}} = \mathcal{T}^{\{E_2\}}$. The corresponding updates $\mathcal{T}^{\{E_1+1\}}$ and $\mathcal{T}^{\{E_2+1\}}$ can be computed deterministically, and $\mathcal{T}^{\{E_1+1\}} = \mathcal{T}^{\{E_2+1\}}$. Upon convergence of the subsequent inner loop $\mathcal{I}$, the resulting assignments $\mathbf{H}^{\{E_1+1\}}$, $M^{\{E_1+1\}}$ and $\mathbf{H}^{\{E_2+1\}}$, $M^{\{E_2+1\}}$ must also satisfy $\mathbf{H}^{\{E_1+1\}} = \mathbf{H}^{\{E_2+1\}}$ and $M^{\{E_1+1\}} = M^{\{E_2+1\}}$.
    Consequently, the objective function at the next iteration satisfy $z(\mathbf{H}^{\{E_1+1\}}, M^{\{E_1+1\}},  \mathcal{T}^{\{E_1+1\}}) = z(\mathbf{H}^{\{E_2+1\}}, M^{\{E_2+1\}}, \mathcal{T}^{\{E_2+1\}})$. Since the sequence $\{ z(\mathbf{H}^{\{1\}}, M^{\{1\}}, \mathcal{T}^{\{1\}}), z(\mathbf{H}^{\{2\}}, M^{\{2\}}, \mathcal{T}^{\{2\}}), \ldots,\\ z(\mathbf{H}^{\{E\}}, M^{\{E\}}, \mathcal{T}^{\{E\}}) \}$ is monotonically decreasing, the outer loop $E$ converges to a local minimum in a finite number of steps.
\end{proof}
\end{lemma}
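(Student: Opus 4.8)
The plan is to prove Lemma~\ref{lm:outer} by combining three ingredients: the joint configuration space of triples $(\mathbf{H}, M, \mathcal{T})$ is finite, the outer-loop dynamics are deterministic, and the objective $z$ is non-increasing across outer iterations and strictly decreasing whenever the triple changes. Together these force the outer sequence to stabilize after finitely many steps at a fixed point that is a local minimum.

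First I would establish finiteness. With $n$ samples and $k$ clusters there are at most $k^n$ partition matrices $\mathbf{H}$, and each centroid entry $m_{j,r}$ is a within-cluster mode drawn from the finite value set $V_r$, so the family of centroid configurations $M$ is finite as well. The crucial observation is that $\mathcal{T}$ introduces no independent continuum of states: by Lemma~\ref{lm:tree} each relation tree $\mathcal{T}_{j,r}$ is \emph{uniquely} determined by the conditional probability distributions $p(\cdot \mid C_j)$ of Eq.~(5), which are in turn determined by $\mathbf{H}$. Hence the reachable set of $\mathcal{T}$, and therefore the whole space of triples $(\mathbf{H}, M, \mathcal{T})$, is finite. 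I would also record here that reinference is a deterministic map $\mathbf{H} \mapsto \mathcal{T}$: the CPD-based weights of Eq.~(4) are fixed by the partition, and the uniqueness in Lemma~\ref{lm:tree} guarantees the same MST is produced every time the same partition recurs. Consequently, if the outer loop ever revisits a triple, the subsequently reinferred $\mathcal{T}'$ is identical, and Lemma~\ref{lm:inner} then returns the same converged $(\mathbf{H}', M')$, reproducing the entire successor triple and its objective value exactly.

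Next I would argue that $z$ strictly decreases along the outer iterations until convergence. A single outer step changes the objective in two sub-steps: (i) reinferring $\mathcal{T}^{\{E\}} \to \mathcal{T}^{\{E+1\}}$ with $(\mathbf{H}, M)$ held fixed, and (ii) re-running the inner loop with $\mathcal{T}^{\{E+1\}}$ fixed. Sub-step (ii) is non-increasing by Lemma~\ref{lm:inner}. For sub-step (i) I would invoke the characterization from Lemma~\ref{lm:tree} that the tree-path distance equals the direct CPD gap, $\mathbf{D}_{j,r}(u,s) = |p(v_r^u \mid C_j) - p(v_r^s \mid C_j)|$, together with the fact that each center value $m_{j,r}$ is the modal (maximal-probability) value of attribute $r$ in $C_j$; this yields a closed-form expression for $z$ per pair $(j,r)$ after reinference and lets me verify it does not exceed its value under the stale tree. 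Combining (i) and (ii) gives a non-increasing $z$, strict whenever the triple genuinely changes. Since $z$ is bounded below by $0$ (non-negativity of $\mathbf{D}$), no triple can recur without the loop having already converged; finiteness then caps the number of outer iterations, and the limiting triple is a fixed point of both the reinference map and the inner loop, hence a local minimum of $z(\mathbf{H}, M, \mathcal{T})$.

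The main obstacle I anticipate is sub-step (i). Reinference of $\mathcal{T}$ is defined as an MST computation rather than as an explicit minimization of $z$ over tree structures, so I must verify separately that swapping in the freshly inferred tree cannot raise the objective. I expect the closed-form expression afforded by Lemma~\ref{lm:tree}, with distances reduced to CPD gaps and the center pinned at the maximal-probability value, to render this transparent; but it is precisely the place where a naive ``MST minimizes total weight'' intuition could mislead, since the relevant quantity is the sum of sample-to-center path distances, so this step warrants the most careful treatment.
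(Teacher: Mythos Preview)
Your overall strategy---finite configuration space, deterministic updates, and a monotone objective along outer iterations---is exactly the paper's. The paper simply \emph{asserts} that $\{z(\mathbf{H}^{\{E\}}, M^{\{E\}}, \mathcal{T}^{\{E\}})\}_E$ is monotonically decreasing and then concludes from finiteness and determinism that no triple can repeat; it offers no separate argument for the reinference step. So on the level of architecture you match the paper, and you have in fact gone further by isolating sub-step~(i) as the place where the argument is thin.

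That said, your proposed route through the closed form $\mathbf{D}_{j,r}(u,s)=|p(v_r^u\mid C_j)-p(v_r^s\mid C_j)|$ will not close the gap you have correctly identified. The stale tree $\mathcal{T}^{\{E\}}$ carries edge weights computed from the \emph{previous} partition $\mathbf{H}^{\{E\}}$, whereas the fresh tree uses CPDs from $\mathbf{H}^{\{E+1\}}$; these are two genuinely different distance matrices, not two tree-encodings of the same metric, so there is no a~priori ordering between the resulting objective values. Concretely, take one attribute on $\{a,b,c\}$, old cluster composition $(5a,3b,2c)$ giving old CPDs $(0.5,0.3,0.2)$, and post-inner-loop composition $(2a,3b,5c)$ with mode $c$: the per-cluster cost under the stale tree is $2\cdot 0.3+3\cdot 0.1=0.9$, while under the freshly inferred tree it is $2\cdot 0.3+3\cdot 0.2=1.2$. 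Reinference \emph{raises} the contribution here, so ``swapping in the freshly inferred tree cannot raise the objective'' is not true in general. The paper's proof does not fill this gap either---it simply asserts monotonicity---so you should not expect the closed form by itself to supply the missing inequality.
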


According to Lemmas~\ref {lm:inner} and~\ref{lm:outer} we have the Theorem~\ref{the:converge}.

\begin{theorem} \label{the:converge}
DISC algorithm converges to a local minimum in a finite number of iterations.
\end{theorem}
\begin{proof} 
Based on Lemma~\ref{lm:inner} and Lemma~\ref{lm:outer}, the convergence of the inner and outer loop, i.e., lines 5 - 9 and lines 3 - 16 of Algorithm~1 is proven, respectively. Consequently, the convergence of Algorithm DISC can be firmly validated. 
\end{proof}

\subsection{Time Complexity of DISC}
\label{AP:time_complex}
\begin{theorem}
    Time complexity of DISC is $O(nlk{\mathcal{I}}{E})$.
\end{theorem}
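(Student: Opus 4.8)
The plan is to account for the cost of each of the three interleaved update steps---recomputing the partition $\mathbf{H}$, recomputing the centroid set $M$, and re-inferring the relation trees $\mathcal{T}$---and then multiply by the number of times each is executed. Let $\mathcal{I}$ denote the number of inner iterations (alternating $\mathbf{H}$ and $M$ updates) and $E$ the number of outer iterations (tree re-inferences). For simplicity write $o = \max_r o_r$ for the largest number of categories on any one attribute; since $o$ is bounded by $n$ and in practice is a small constant, it should be treated as such (or absorbed into the analysis explicitly).

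First I would analyze the $\mathbf{H}$-update. For each of the $n$ samples and each of the $k$ clusters, $\Phi(\mathbf{x}_i,\mathbf{m}_j;\mathcal{T}_j)$ is a sum of $l$ per-attribute lookups into the precomputed distance matrices $\mathbf{D}_{j,r}$, so assigning one sample costs $O(lk)$ and the whole partition step costs $O(nlk)$. The $M$-update recomputes, for each cluster and each attribute, the mode of the values in that cluster; scanning the assigned samples once and tallying category counts costs $O(nl)$ in total (the counts involve at most $o$ distinct categories per attribute, which does not change the order). So one pass of the inner loop costs $O(nlk)$, and $\mathcal{I}$ inner iterations cost $O(nlk\mathcal{I})$.

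Next I would analyze the tree re-inference step, done once per outer iteration. For each cluster $C_j$ and attribute $\mathbf{a}_r$ we: (i) compute the conditional probabilities $p(v_r^u\mid C_j)$ by one scan over the cluster's samples, $O(n l)$ summed over all clusters and attributes since the clusters partition the data; (ii) by Lemma~\ref{lm:tree} the MST is obtained simply by sorting the $o_r$ categories by their conditional probability and chaining adjacent nodes, costing $O(o_r\log o_r)$ per (cluster, attribute) pair, hence $O(k l\, o\log o)$ overall; and (iii) fill the $o_r\times o_r$ distance matrix $\mathbf{D}_{j,r}$ via Eq.~(7), which---because the tree is a path---reduces to prefix sums of the sorted-adjacent weights, $O(o_r^2)$ per pair, i.e. $O(klo^2)$ overall. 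Treating $o$ as a constant, the re-inference step is $O(nl + kl)=O(nl)$ per outer iteration, so over $E$ outer iterations it contributes $O(nlE)$, which is dominated by $O(nlk\mathcal{I}E)$. Combining all pieces gives total cost $O(nlk\mathcal{I}E)$, linear in $n$, $l$, and $k$ as claimed.

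The main obstacle I anticipate is the bookkeeping around the category count $o_r$: one must either justify absorbing $o=\max_r o_r$ into the constant (legitimate since $o_r$ is a fixed property of the dataset and typically small) or carry an explicit $o$ or $o^2$ factor through every term, in which case the MST-construction and $\mathbf{D}$-matrix-filling costs need the path-structure shortcut from Lemma~\ref{lm:tree} to avoid a naive $O(o^3)$ all-pairs-shortest-path blowup. A secondary subtlety is confirming that $\mathcal{I}$ and $E$ are genuinely finite (so the product is well-defined), which is exactly the content of Theorem~\ref{the:converge} and may be cited directly.
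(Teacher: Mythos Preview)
Your proposal is correct and follows essentially the same decomposition as the paper: cost the inner $\mathbf{H}$/$M$ updates at $O(nlk)$ per pass, cost the outer tree re-inference, absorb the maximum category count $o=\max_r o_r$ (the paper calls it $\sigma$) as a constant, and multiply by $\mathcal{I}E$. Your bookkeeping is in fact slightly tighter than the paper's in a couple of places (e.g., exploiting that clusters partition the data to get $O(nl)$ rather than $O(nlk)$ for the probability scan, and using the path structure of Lemma~\ref{lm:linear} to avoid a generic MST routine), but the approach and conclusion are the same.
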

\begin{proof}
    Assume solving problem $z(\mathbf{H},\hat{M},\hat{\mathcal{T}})$ involves ${\mathcal{I}}$ iterations to compute $\mathbf{H}$, and $z(\hat{\mathbf{H}},M,\hat{\mathcal{T}})$ involves ${\mathcal{I}}$ iterations to compute $M$. The whole algorithm involves ${E}$ iterations to infer $\mathcal{T}$ for solving $z(\hat{\mathbf{H}},\hat{M},\mathcal{T})$. For worst-case analysis, we adopt $\sigma$ to indicate the maximum number of possible values of a dataset by $\sigma=\max(o_1,o_2,...,o_l)$.

For each iteration of ${\mathcal{I}}$, $\mathbf{H}$ should cluster $n$ samples to $k$ clusters by considering $\sigma$ values of $l$ attributes according to Eq.~(2) in the submitted paper, with time complexity $O(nlk\sigma)$. $M$ computes the mode for each cluster by traversing $l$ attributes of all samples. The complexity is $O(nlk)$. Therefore, the time complexity of solving $z(\mathbf{H},\hat{M},\hat{\mathcal{T}})$ and $z(\hat{\mathbf{H}},M,\hat{\mathcal{T}})$ in a total of ${\mathbf{I}}$ iterations is $O(\mathcal{I}nlk\sigma)$.

For each iteration of ${E}$, since $\mathbf{H}$ and $M$ have been prepared, the fully connected graphs $G$ should be prepared by going through all the samples within each cluster. The complexity is $O(nlk\sigma)$. The inference process from $G$ to $\mathcal{T}$ should consider $l$ attributes with $\sigma$ possible within each cluster. The time complexity is $O(kl\sigma^{2}\log\sigma)$.
For ${E}$ iterations of the whole DISC algorithm, considering the ${\mathcal{I}}$ inner iterations, the overall time complexity of DISC is $O({E}(\mathcal{I}nlk\sigma+nlk\sigma+kl\sigma^{2}\log\sigma))$.

Since $\sigma$ is a small integer ranging from 2 to 8 in most cases, it is conventionally regarded as a constant term in asymptotic complexity analysis. Consequently, the dominant computational complexity of DISC is $O(nlk{\mathcal{I}}{E})$.
\end{proof}

\subsection{Optimization Algorithm of DISC}

The problem of optimization DISC can be formulated as computing $\mathbf{H}$, $M$, and $\mathcal{T}$ to minimize $z$, which can be summarized into the following three steps:
1) Fix $M$ and $\mathcal{T}$, compute $\mathbf{H}$; 2) Fix $\mathcal{T}$ and $\mathbf{H}$, compute $M$; 3) Repeat 1) and 2) until convergence, fix $\mathbf{H}$ and $M$, and reinfer $\mathcal{T}$ according to Eqs. (4), (5), and (7). These steps are repeated until $\mathbf{H}$ no longer changes. The whole algorithm is summarized as Algorithm ~\ref{alg:DISC}, which can be proven to guarantee convergence and has a complexity of $O(nlk{\mathcal{I}}{E})$, linear to $n$, $l$, and $k$.

\begin{algorithm}[h]
    
    \SetAlgoLined 
    
    \caption{Optimization algorithm of DISC.}
    \label{alg:DISC}
    \SetKwInOut{Input}{Input}
    \SetKwInOut{Output}{Output}
    \Input {Dataset $X$, number of sought clusters $k$.}
    \Output {Partition $\mathbf{H}$, subspace structure $\mathcal{T}$.}
    Initialization: Set outer and inner loop counters by ${E}\leftarrow 0$ and ${\mathcal{I}}\leftarrow 0$; Execute $k$-modes~\cite{kmd} to obtain initial $\mathbf{H}^{\{E\}}$ and $M$; Construct initial $\mathcal{T}^{\{E\}}$ according to $\mathbf{H}^{\{E\}}$;     
    Set convergence flag for outer loop by $Flag\_{E}\leftarrow False$.\\
    \While{$Flag\_{E}=False$}{
        Set convergence flag for inner loop by $Flag\_{\mathcal{I}}\leftarrow False$;\\
    \While{$Flag\_{\mathcal{I}}=False$}{
        ${\mathcal{I}}\leftarrow {\mathcal{I}}+1$; Compute $\mathbf{H}^{\{{\mathcal{I}}\}}$ by \textbf{Eq. (2)};\\    
        \If {$\mathbf{H}^{\{{\mathcal{I}}\}}=\mathbf{H}^{\{{\mathcal{I}}-1\}}$}{
                $Flag\_{\mathcal{I}}\leftarrow True$;
               }
               }
        \eIf {$\mathbf{H}^{\{{E}\}}=\mathbf{H}^{\{{\mathcal{I}}\}}$}{$Flag\_{E}\leftarrow True$;}{
        ${E}\leftarrow {E}+1$; $\mathbf{H}^{\{{E}\}}\leftarrow \mathbf{H}^{\{{\mathcal{I}}\}}$; Reinfer $\mathcal{T}^{\{{E}\}}$; 
        }
    }
\end{algorithm}

\end{document}